\documentclass[11pt]{article}

\usepackage[final]{acl}

\usepackage{times}
\usepackage{latexsym}

\usepackage[T1]{fontenc}
\usepackage[utf8]{inputenc}

\usepackage{microtype}

\usepackage{inconsolata}
\usepackage{multirow}

\usepackage{graphicx}

\usepackage{amsmath}
\usepackage{amssymb}
\usepackage{mathtools}
\usepackage{amsthm}
\usepackage{subcaption} % provides subfigure environment
\usepackage{caption}
\usepackage{enumitem}                  % For custom lists and margins

\usepackage{amsthm}
\newtheorem{theorem}{Theorem}
\newtheorem{proposition}[theorem]{Proposition}

\usepackage[capitalize,noabbrev]{cleveref}
\usepackage[dvipsnames]{xcolor}
\usepackage{pifont} % for ding symbols

\newcommand{\cmark}{\ding{51}} % ✓
\newcommand{\xmark}{\ding{55}} % ✗

\usepackage[textsize=tiny]{todonotes}

\usepackage{booktabs,array}
\usepackage[table]{xcolor}
\usepackage{caption}

\definecolor{V7WHeaderBG}{HTML}{E7D9FF}

\definecolor{NextQA}{RGB}{220,235,255}
\definecolor{DramaQ}{RGB}{223,255,224}
\definecolor{STARc}{RGB}{255,232,204}
\definecolor{SeqA}{RGB}{214,234,248}
\definecolor{SeqB}{RGB}{220,245,220}
\definecolor{SeqC}{RGB}{224,244,248}
\definecolor{SeqD}{RGB}{243,229,245}
\definecolor{accH}{RGB}{0,123,67}
\definecolor{fogH}{RGB}{130,60,145}
\definecolor{bestbg}{RGB}{255,247,173}
\definecolor{bestfg}{RGB}{139,108,0}
\definecolor{runnerbg}{RGB}{216,231,255} % light blue
\definecolor{runnerfg}{RGB}{0,76,153}    % dark blue

\newcommand{\hdr}[2]{\multicolumn{2}{c}{\cellcolor{#1}\textbf{#2}}}
\newcommand{\HACC}{\textcolor{accH}{Acc ($\uparrow$)}}
\newcommand{\ACC}{\textcolor{accH}{Acc}}
\newcommand{\HFOG}{\textcolor{fogH}{Fog ($\downarrow$)}}
\newcommand{\FOG}{\textcolor{fogH}{Fog}}
\newcommand{\best}[1]{\cellcolor{bestbg}\textcolor{bestfg}{\textbf{#1}}}
\newcommand{\runner}[1]{\cellcolor{runnerbg}\textcolor{runnerfg}{\underline{#1}}}
\newcolumntype{L}[1]{>{\raggedright\arraybackslash}p{#1}}

\newcommand{\venue}[1]{\textcolor{black!60}{\footnotesize\textsf{(#1)}}}

\newcommand{\bestcap}[1]{\colorbox{bestbg}{\textcolor{bestfg}{\textbf{#1}}}}
\newcommand{\runnercap}[1]{\colorbox{runnerbg}{\textcolor{runnerfg}{\underline{#1}}}}

\definecolor{capBlue}{RGB}{70,120,255}   % hơi sáng, nhìn "ra blue"
\definecolor{capGreen}{RGB}{40,150,90}   % xanh lá dịu
\definecolor{capOrange}{RGB}{230,150,40} % cam dịu
\definecolor{capCyan}{RGB}{0,160,180}    % cyan/teal
\definecolor{capPurple}{RGB}{150,90,190} % tím

\colorlet{V7WHdr}{teal!45!violet} % teal-violet; chỉnh 55 để xanh/tím hơn

\newcommand{\VisualWcap}{\textcolor{V7WHdr}{\textbf{Visual7W}}}

\newcommand{\NextQAcap}{\textcolor{capBlue}{\textbf{NExT\mbox{-}QA}}}
\newcommand{\DramaQcap}{\textcolor{capGreen}{\textbf{DramaQA}}}

\DeclareRobustCommand{\bestcap}[1]{%
  \colorbox{bestbg}{\textcolor{bestfg}{\textbf{#1}}}%
}
\DeclareRobustCommand{\runnercap}[1]{%
  \colorbox{runnerbg}{\textcolor{runnerfg}{\underline{#1}}}%
}

\definecolor{vqaH}{RGB}{0,110,200}   % VQA_C (blue)
\definecolor{fogC}{RGB}{180,95,0}    % Fog_C (orange/brown)

\definecolor{ClimbC}{RGB}{255,240,245} % hồng rất nhạt (ví dụ)

\title{DynaTokens: Controlling Token Dynamics for \\ Continual Video--Language Understanding}

\author{
  \textbf{Toan Nguyen}\textsuperscript{1} \quad
  \textbf{Yang Liu}\textsuperscript{1} \quad
  \textbf{Celso De Melo}\textsuperscript{2} \quad
  \textbf{Flora D. Salim}\textsuperscript{1}
\\
  \textsuperscript{1}\,School of Computer Science and Engineering, University of New South Wales, Australia \\
  \textsuperscript{2}\,DEVCOM Army Research Laboratory, USA \\
  \texttt{\{toan.nguyen,flora.salim\}@unsw.edu.au} \\
  \texttt{yang.liu39@student.unsw.edu.au},~\texttt{celso.miguel.de.melo@gmail.com} \\
  \small{\textbf{Correspondence:} \href{mailto:flora.salim@unsw.edu.au}{flora.salim@unsw.edu.au}}
}

\newcommand{\model}{\textsc{DynaTokens}}

\begin{document}
\maketitle

\begin{abstract}
Continual VideoQA with multimodal LLMs remains challenging because sequential adaptation induces task interference, while storing task-specific prompts becomes impractical as task sequences grow.
We introduce \textbf{\model}, a transformer-based token generator that dynamically produces fine-tuning tokens on demand, enabling task-adaptive prompt updates through shared generation weights.
To mitigate forgetting, we introduce meta-learning-inspired regularisers that look ahead to avoid task-specific sharp update directions while anchoring the evolving generator to prior-task behaviours.
We theoretically connect this objective to sharpness-aware optimisation, showing how it favours flatter cross-task minima and improves retention.
\model~ combines gradient-free routing based on robust pretrained token and visual embeddings with lightweight auxiliary multimodal supervision, reducing router drift during continual adaptation.
Across standard continual VideoQA benchmarks, \model~ achieves higher average accuracy and substantially lower forgetting than strong baselines.
It also improves zero-shot generalisation and remains effective in longer domain-incremental sequences with extended task shifts.
Finally, we introduce a challenging ImageQA$\rightarrow$VideoQA protocol and show that \model~ enables robust cross-modal continual transfer.
Our code is available at \href{https://github.com/cruiseresearchgroup/DynaTokens}{this link}.

% Our code is available at \href{https://anonymous.4open.science/r/HyperTokens-ICML2026-Submission-6CE2}{this link}.

% design HyperTokens well?

% theoretical part link to SAM???
\end{abstract}

\section{Introduction}
%Adapt the first part for the specific problem such as VideoQA. [Add another sentence on SOTA performance of recent models on VideoQA. However, ...]

% Modern AI increasingly operates on continuous multimodal data streams, comprising video, audio, language, and structured signals from dynamic environments (e.g., wearables, cameras and sensors). In this setting, the conventional "train-then-deploy" paradigm, even for large-scale models such as GPT-4, assumes fixed datasets and stationarity and thus degrades under distribution shift, emerging tasks, and novel concepts. Naïvely fine-tuning on new tasks induces \textit{catastrophic forgetting} \cite{mccloskey1989catastrophic,tan2025bisecle}—erasing base knowledge and prior tasks/concepts—and the effect worsens as tasks accumulate.

% Address the problem of growing the size of network

% Also, in continual VideoQA, tasks differ in both video domains and question types (e.g., counting, localisation); this heterogeneity makes the problem more challenging.

% an effort to 

Modern multimodal LLMs have demonstrated remarkable capacity for jointly reasoning over video, images, audio, and language, achieving strong performance on tasks such as VideoQA~\cite{shu2025audio, maaz2024video, qian2024streaming}. However, the prevailing \emph{train-then-deploy} paradigm~\cite{liu2024deepseek, team2024gemini, zhao2024continual, liu2025continual} rests on a stationarity assumption that breaks down in the face of evolving task distributions, domain shifts, and previously unseen concepts. Na\"ively fine-tuning on incoming tasks leads to \emph{catastrophic forgetting}~\cite{mccloskey1989catastrophic,cai2024empowering}, and full-parameter updates quickly become prohibitive at scale. Parameter-efficient adaptation (PEA)~\cite{cai2024empowering,adhikari2025adaprefix++,razdaibiedina2023progressive} provides an appealing alternative by introducing a small number of learnable parameters atop a frozen backbone, yet applying it to the continual setting remains an open challenge.

The core difficulty is the tension between \emph{plasticity} and \emph{stability}: sharply shifting task distributions cause sequential prompt updates to interfere and erode prior knowledge~\cite{tan2025bisecle}. Allocating separate adapter prompts per task~\cite{razdaibiedina2023progressive,zeng2025modalprompt} preserves task-specific knowledge but scales poorly, while sharing prompt parameters across tasks~\cite{tan2025bisecle,cai2024empowering,wang2022learning} improves efficiency yet exposes shared components to conflicting gradients~\cite{nguyen2026fogo}. Alternatives such as LoRA and prefix tuning~\cite{adhikari2025adaprefix++, yu2025moe} also fall short of simultaneously (i)~scaling gracefully to many tasks, (ii)~preserving fine-grained, task-specific control without cross-task interference, and (iii)~generalising reliably to multimodal temporal settings such as VideoQA.

% However, in continual learning for challenging multimodal video tasks such as VideoQA, task distributions can differ sharply (e.g., indoor versus outdoor videos or distinct question types), so prompt updates may interfere and still incur substantial forgetting without careful parameter control~\cite{tan2025bisecle}. Recent methods \cite{razdaibiedina2023progressive, rusu2016progressive} sidestep this by storing task-specific adapter prompts and stacking them over time, but the approach scales poorly as the number of tasks grows. Other approaches \cite{tan2025bisecle,cai2024empowering,wang2022learning} share a prompt parameterisation across tasks, but updates can conflict on the shared components and trigger interference under a small prompt budget. Alternatives such as LoRA or prefix tuning \cite{adhikari2025adaprefix++, yu2025moe} are efficient, yet they remain challenging to (i) scale \emph{nicely} to many tasks, (ii) preserve fine-grained, task-specific control \emph{without} cross-task interference, and (iii) extend reliably to multimodal VideoQA.

\begin{figure*}[t]
  \centering
  \setlength{\tabcolsep}{4pt}
  \begin{tabular}{ccc}
    \includegraphics[width=0.59\textwidth]{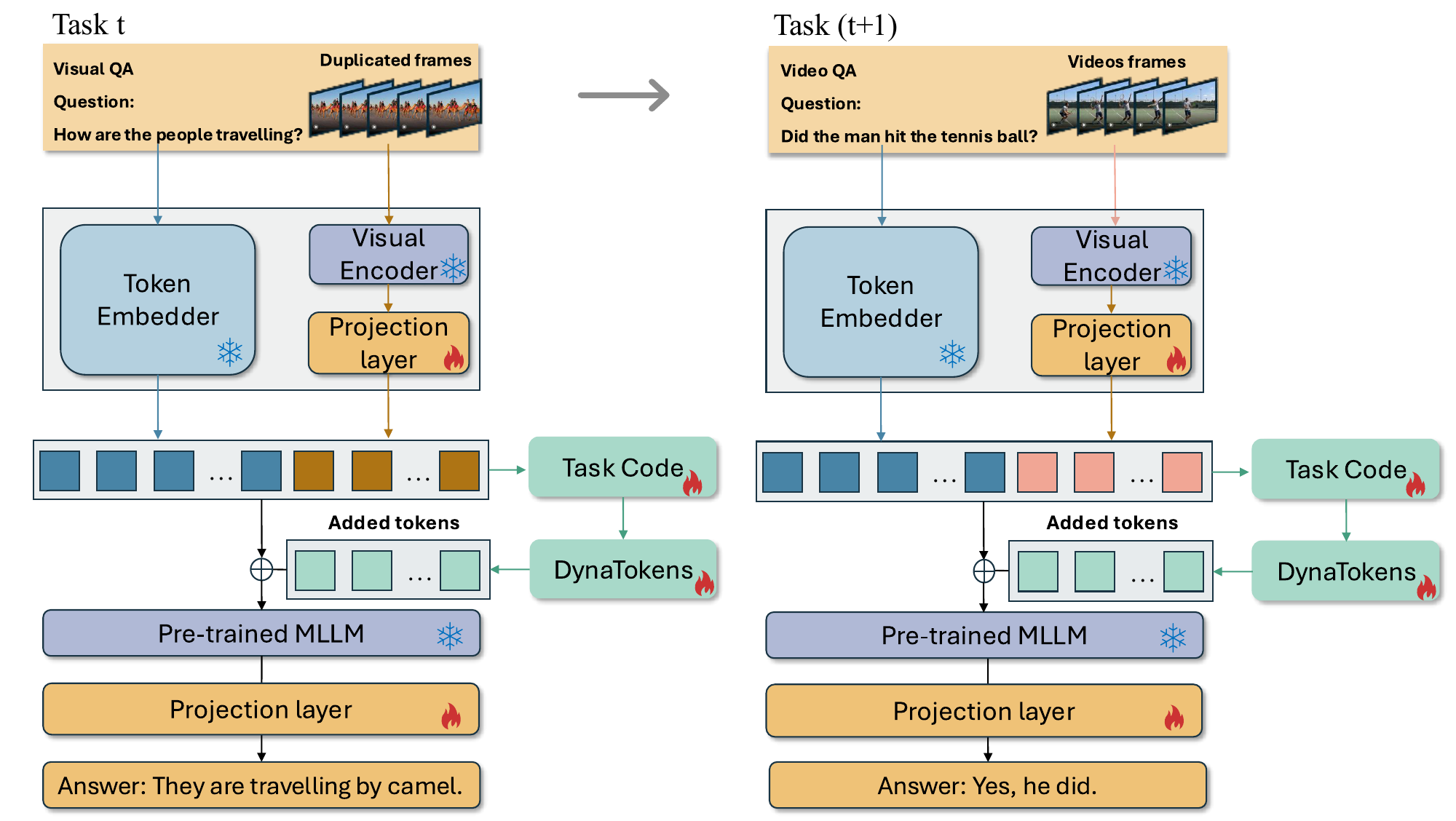} &
    \includegraphics[width=0.20\textwidth]{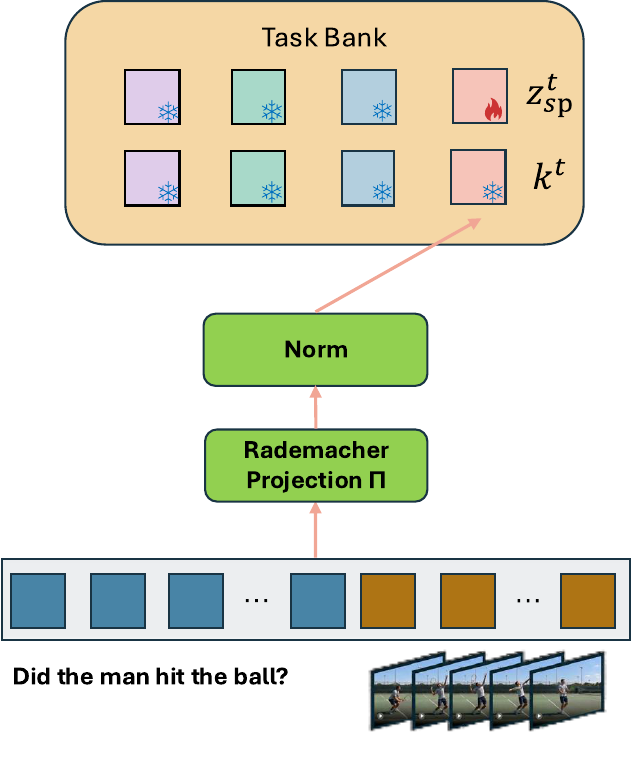} &
    \includegraphics[width=0.20\textwidth]{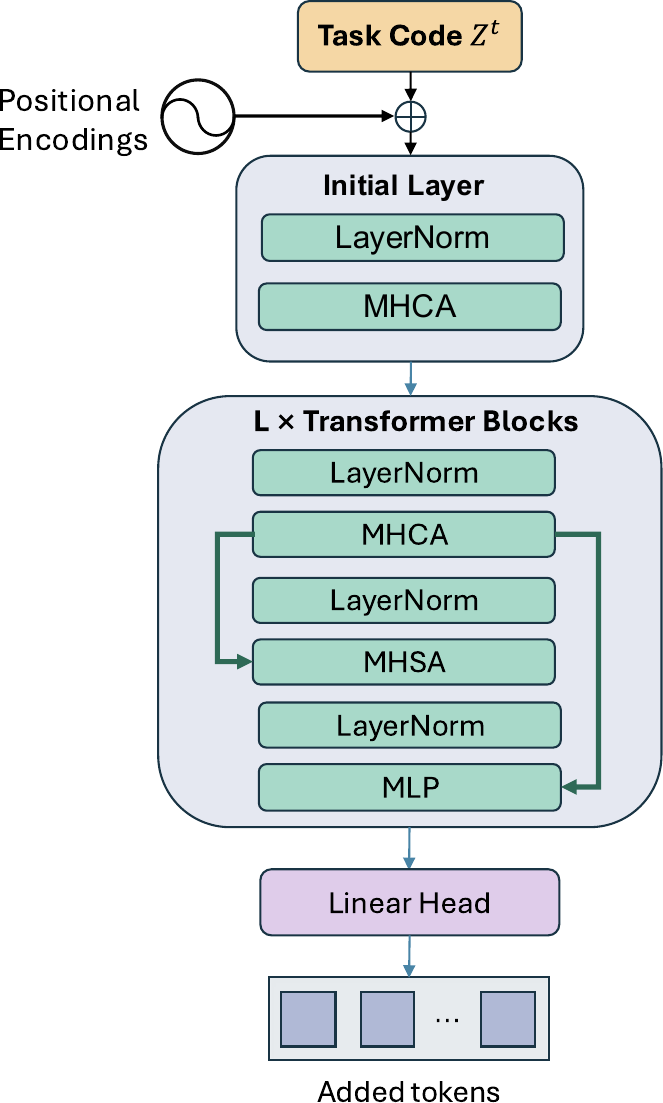} \\
  \end{tabular}
  \caption{\textbf{\model~overview.} (\textbf{Left}) Continual adaptation with \model~ for VideoQA and cross-modal transfer VisualQA$\rightarrow$VideoQA. A \emph{fixed-size} generator synthesises \emph{task-specific} fine-tuning tokens. (\textbf{Middle}) Task bank as key--value memory: multimodal embeddings 
are projected via $\Pi$ and normalised to form retrieval keys $k^t$, 
indexing learnable task codes $z_{sp}^t$ for the token generator. (\textbf{Right}) Transformer-based token-generator architecture.}
  \label{fig:dynatokens_overview}
\end{figure*}

To address these challenges, we draw inspiration from hypernetworks~\cite{ha2017hypernetworks} and propose \textbf{\model}, a transformer-based universal token generator that synthesises adapter prompts \emph{on demand} from compact task codes. Since the generator weights are shared across tasks, its size remains fixed regardless of the task sequence length, so \emph{memory grows only minimally} with the number of tasks. Figure~\ref{fig:dynatokens_overview} illustrates the overall design.

A central challenge in our hypernetwork-based prompt control is twofold: obtaining informative task codes for routing, and updating the generator without forgetting. For routing, we adopt a gradient-free key--value memory~\cite{geva2021transformer} with no learnable parameters. The keys are token-level embeddings from the frozen pretrained backbone, already discriminative enough for reliable retrieval without further training. The values are lightweight task codes, each composed of task-specific and task-invariant learnable modules. Since the routing mechanism introduces no additional trainable parameters, it is \emph{immune to drift by construction}: retention of task knowledge is handled by the generator and its regularisation, not by the router. At inference, retrieved task codes are fed into the shared generator to produce task-adapted prompts on the fly.

To preserve the generator under continual updates, we train it with a meta-learning-inspired objective that first \emph{looks ahead} to avoid sharp, task-specific overfitting directions, and then \emph{looks back} to regularise the update by matching an anchored behaviour on a small set of past task codes. We theoretically connect this objective to sharpness-aware minimisation (SAM)~\cite{foret2021sharpnessaware}, showing that it favours flatter cross-task minima and improves robustness under continual adaptation. Beyond regularisation, the shared generation weights enable \model~ to leverage lightweight auxiliary multimodal objectives effectively, encouraging richer cross-modal token learning.

Empirically, \model~ achieves strong accuracy with substantially reduced forgetting on standard continual VideoQA benchmarks, while also improving zero-shot generalisation and remaining effective on continual dataset with extended domain shift sequences. To further stress-test continual adaptation, we introduce a challenging ImageQA$\rightarrow$VideoQA protocol in which the model must shift from static image understanding to temporal video reasoning, inducing both a \emph{modality mismatch} and a \emph{domain mismatch} between single-frame recognition and temporal reasoning. The strongest recent PEA baseline degrades sharply under this regime, whereas \model~remains robust and outperforms it across both stages, highlighting the effectiveness of token-hypernetwork generation for multimodal continual learning.

\section{Related Work}
\label{sec:Related}

\paragraph{Video Question Answering (VideoQA)}
requires answering questions grounded in video inputs, demanding joint 
reasoning over visual and linguistic context. Existing approaches either 
build specialised video vision--language models~\cite{deitke2025molmo,alayrac2022flamingo,yu2023self} 
or adapt large pretrained LLMs to better capture temporal visual 
context~\cite{yang2022zero,zhang2024simple,zhang2024llamaadapter}. Yet 
they implicitly assume stationary data and are not designed for the 
streaming setting in which VideoQA datasets arrive sequentially with 
substantial distribution shifts.

\paragraph{Continual Learning for VideoQA}
considers a streaming setting where a model is trained sequentially on 
multiple VideoQA datasets, with catastrophic forgetting as the central 
challenge. Replay-based methods rehearse past 
samples~\cite{buzzega2020dark,nguyen2024class} but face prohibitive 
storage costs for raw videos. Regularisation- and architecture-based 
approaches have been explored mainly for multimodal 
classification~\cite{ke2020continual,jha2024clap4clip,liu2025c} and do 
not transfer cleanly to next-token LLM-style VideoQA, motivating recent 
shifts toward parameter-efficient adaptation atop frozen backbones.

\paragraph{Parameter-Efficient Adaptation (PEA)}~\cite{zhang2024llamaadapter,li2021prefix} 
enables continual adaptation by training lightweight task-specific 
modules atop a frozen backbone. We highlight three differentiating 
properties of \model{} against representative baselines:

\textbf{(i) Bank size.} Prompt-pool methods~\cite{wang2022learning,wang2022dualprompt,smith2023coda} 
and per-task prompt-storage methods~\cite{razdaibiedina2023progressive,zeng2025modalprompt} 
maintain prompt collections that grow linearly with the task stream. 
\model{} instead synthesises prompts on demand from compact task codes 
via a \emph{fixed-size} generator, decoupling memory cost from the 
number of tasks.

\textbf{(ii) Routing dependency.} ModalPrompt~\cite{zeng2025modalprompt} 
requires auxiliary CLIP encoders for prompt selection, adding external 
modules and inference cost. \model{} routes directly through the frozen 
backbone's own token embeddings, requiring no external encoders.

\textbf{(iii) Look-ahead regularisation.} The closest baseline, 
\textbf{AdaPrefix++}~\cite{adhikari2025adaprefix++}, also generates 
fine-tuning parameters with a hypernetwork~\cite{ha2017hypernetworks} 
from stored per-task inputs, but regularises against past behaviour 
without look-ahead and offers no theoretical analysis of its regulariser. 
\model{} adds a look-ahead regulariser whose gains grow with more 
look-ahead steps and which admits the sharpness-based characterisation 
in Sec.~\ref{sec:theory}; Appendix~\ref{app:adaprefix} details the 
remaining differences in setting, routing, and architecture. Crucially, existing PEA methods are evaluated primarily in 
unimodal or static image-text settings, leaving continual VideoQA with 
its temporal reasoning demands largely unexplored. We provide an 
extended discussion of prompt-based methods and hypernetwork-based 
continual learning in Appendix~\ref{sec:more_related_work}.

\section{Background}
\label{sec:Background}

\paragraph{Problem Setup.}
We study continual VideoQA over a sequence of tasks $t=1,\dots,T$. Task $t$ provides a dataset $\mathcal{D}_t=\{(V_i^t,Q_i^t,A_i^t)\}_{i=1}^{N_t}$, where $V_i^t\!\in\!\mathbb{R}^{L_v\times C}$, $Q_i^t\!\in\!\mathbb{R}^{L_q\times C}$, and $A_i^t\!\in\!\mathbb{R}^{L_a\times C}$ are the embedded video, question, and answer token sequences, respectively. The goal is to adapt a pretrained multimodal LLM $f_\theta$ to each incoming task while preserving performance on all previous ones. Denoting the causal context $\mathcal{C}_{i,j}^t=(V_i^t,Q_i^t,A_{i,\le j}^t)$, the per-task training objective is the standard next-token negative log-likelihood:
\begin{equation}\label{eq:nll}
  \mathcal{L}_{\mathrm{NLL}}^{t}
  = -\,\mathbb{E}_{i\sim\mathcal{D}_t}\sum_{j=1}^{L_a}
    \log\, p_\theta\!\bigl(a_{i,j}^t\mid\mathcal{C}_{i,j-1}^t\bigr).
\end{equation}

\paragraph{Forgetting.}
Sequentially minimising~\eqref{eq:nll} across tasks induces \emph{catastrophic forgetting}: updates fitted to $\mathcal{D}_t$ may overwrite knowledge required for earlier tasks, especially in the replay-free regime where past data and gradients are unavailable. Parameter-efficient adaptation (PEA) partially mitigates this by freezing the backbone and training only lightweight modules (e.g., adapters, prompts, or LoRA weights), though these modules must still be stored and correctly routed at inference, which becomes increasingly difficult as the task stream grows.

\section{Method}
\label{sec:Method}

\subsection{DynaTokens}
\label{subsec:hypertokens}

We introduce \textbf{\model}, a hypernetwork-based method that synthesises task-specific prompt tokens \emph{on demand} from compact task codes. Each task~$t$ is represented by two lightweight objects: a generative code $z^t$ for prompt synthesis, and a retrieval key $k^t$ for identifying the task at inference. Decoupling \emph{what to generate} from \emph{how to route} enables scalable continual prompt control without storing per-task prompts.

\paragraph{Prompt Generation via a Token Hypernetwork.}
We split the generative code into a shared, task-invariant component and a task-specific one, $z^t = [z_{\mathrm{inv}},\, z_{\mathrm{sp}}^t] \in \mathbb{R}^{N_z \times C_z}$ with $N_z = N_{\mathrm{inv}} + N_{\mathrm{sp}}$. To produce layer-specific prompts, the hypernetwork $H_\phi$ is additionally conditioned on a learnable layer embedding $e_\ell \in \mathbb{R}^{d}$ for each prompted layer $\ell \in \{1, \dots, L_p\}$, where $d$ is its internal width:
\begin{equation}
\label{eq:hypertoken_generator}
\begin{split}
  P_\ell^t &= H_\phi(z^t, e_\ell) \in \mathbb{R}^{N_p \times C}, \\
  P^t &= [P_1^t; \dots; P_{L_p}^t] \in \mathbb{R}^{L_p \times N_p \times C},
\end{split}
\end{equation}
where $N_p$ is the number of prompt tokens per layer and $C$ is the LLM hidden dimension. Layer conditioning lets a single generator specialise prompts to the role of each transformer layer (low-level features vs.\ higher-level semantics) while keeping per-task storage small. For example, on LLaMA2-7B ($C{=}4096$) with $N_{\mathrm{sp}}{=}25$, $C_z{=}256$, $L_p{=}32$, $N_p{=}10$, each task stores only its code $z^t_{\mathrm{sp}}$ and a $D_k{=}256$-dimensional key, $N_{\mathrm{sp}} C_z + D_k \!\approx\! 6.7{\times}10^3$ values (${\approx}26$\,KB), whereas the materialised prompt tensor has $L_p N_p C \!\approx\! 1.3{\times}10^6$ entries, a roughly $200{\times}$ reduction. Each new task therefore adds only kilobytes of storage rather than full prompt blocks.

We instantiate $H_\phi$ as a transformer because prompt synthesis is naturally a structured \emph{set-to-set} mapping: a compact task code must be expanded into many coordinated prompts across layers. The code is projected to the generator width, $X^t = z^t W_z + E_z \in \mathbb{R}^{N_z \times d}$, and the layer identity $e_\ell$ is injected by feature-wise modulation (FiLM) of the internal features. A lightweight pooling head derives $N_p$ initial query tokens $U^{(0)}$ from $X^t$, and $L_h$ pre-norm blocks of cross-attention (MHCA), self-attention (MHSA), and MLP then let the queries repeatedly read from the task code and interact with each other to specialise:
\begin{equation}
\label{eq:h_transformer_block}
\begin{split}
  \bar{U}^{(\ell)} &= U^{(\ell)} + \mathrm{MHCA}\bigl(\mathrm{LN}(U^{(\ell)}), X^t\bigr), \\
  \widetilde{U}^{(\ell)} &= \bar{U}^{(\ell)} + \mathrm{MHSA}\bigl(\mathrm{LN}(\bar{U}^{(\ell)})\bigr), \\
  U^{(\ell+1)} &= \widetilde{U}^{(\ell)} + \mathrm{MLP}\bigl(\mathrm{LN}(\widetilde{U}^{(\ell)})\bigr).
\end{split}
\end{equation}
A shared head $W_o \in \mathbb{R}^{d \times C}$ maps the output states to the LLM width, $P_\ell^t = U^{(L_h)} W_o$; the $L_p$ layers are generated in one batched call and stacked into $P^t$. Full architectural details are given in Appendix~\ref{app:details}.

\paragraph{Decoupled Retrieval Keys.}
Codes optimised for generation need not be good retrieval prototypes, so we maintain a separate key $k^t \in \mathbb{R}^{D_k}$ per task, grounded directly in the frozen input embeddings $(V_i^t, Q_i^t)$. For each sample we form
\begin{equation}
\label{eq:sample_key}
  r_i^t = \mathrm{Norm}\!\bigl(\Pi\,\mathrm{Pool}([V_i^t;\, Q_i^t])\bigr),
\end{equation}
where $\Pi \in \mathbb{R}^{D_k \times C}$ is a fixed Rademacher random projection that approximately preserves distances under the Johnson--Lindenstrauss lemma~\cite{johnson1984extensions}. The task key is its normalised mean,
\begin{equation}
\label{eq:task_key}
  k^t = \mathrm{Norm}\!\Bigl(\tfrac{1}{N_t}\sum\nolimits_{i=1}^{N_t} r_i^t\Bigr).
\end{equation}
Since $\Pi$ is frozen and the underlying embeddings come from frozen modules, all task keys live in a stable, drift-free space. Routing thus adds \emph{no trainable parameters} and requires no rehearsal or anti-forgetting on the router itself, sidestepping a major failure mode of learned routers in continual learning. In practice, $k^t$ is accumulated with a moving average over minibatches, and pooling $Q_i^t$ alone already yields discriminative keys.

\paragraph{Task-Code Initialisation.}
We set $N_{\mathrm{inv}}\!=\!\tfrac{1}{5}N_z$, reserving a small portion of the code budget for shared structure and the rest for task-specific capacity. $z_{\mathrm{inv}}$ is initialised once with $\mathcal{N}(0, \sigma^2 I)$, $\sigma\!=\!0.02$, and updated across tasks. The first task's code is sampled the same way; for $t > 1$, we warm-start $z_{\mathrm{sp}}^t$ from previously learned codes weighted by key similarity:
\begin{equation}
\label{eq:z_sp_init}
  z_{\mathrm{sp}}^t = \sum_{\tau < t} w_\tau^t \, z_{\mathrm{sp}}^\tau + \epsilon, \quad \epsilon \sim \mathcal{N}(0,\sigma^2 I),
\end{equation}
\begin{equation}
\label{eq:z_sp_weights}
  w_\tau^t = \frac{\exp(\cos(k^t, k^\tau)/\rho_z)}{\sum_{j<t}\exp(\cos(k^t, k^j)/\rho_z)}.
\end{equation}
This is a soft retrieval in the key space defined above: semantically related past tasks contribute more, providing positive transfer rather than optimising from scratch. With a small temperature ($\rho_z\!=\!0.1$), the weighting concentrates on the most similar past tasks, approximating soft nearest-neighbour retrieval~\cite{oord2018representation}.

\paragraph{Training Objective and LA-Reg.}
The base objective is the autoregressive negative log-likelihood conditioned on the generated prompts:
\begin{equation}
\label{eq:nll_sum_ht}
  \mathcal{L}_{\mathrm{NLL}}^t = -\,\mathbb{E}_i \sum_{j=0}^{N_{i,a}^t-1} \log p\!\bigl(a_{i,j+1}^t \mid \mathcal{C}_{i,j}^t\bigr),
\end{equation}
where $\mathcal{C}_{i,j}^t = (V_i^t, Q_i^t, A_{i,\le j}^t, P^t)$. Since $H_\phi$ is shared across tasks, na\"ively updating it on $\mathcal{D}_t$ can drift the prompts produced for previous task codes. We prevent this with a \emph{look-ahead regularisation} (\textbf{LA-Reg}) that penalises such drift on stored codes. Let $\phi^\star$ be the generator snapshot before task~$t$ and $\Delta\phi$ the update from $M$ inner-loop steps on $\mathcal{L}_{\mathrm{NLL}}^t$:
\begin{equation}
\label{eq:la_reg_h}
  \mathcal{L}_{\mathrm{LA\text{-}Reg}}^t = \sum_{\tau < t} \bigl\| H_{\phi^\star}(z^\tau) - H_{\phi+\Delta\phi}(z^\tau) \bigr\|_2^2.
\end{equation}
LA-Reg requires only the stored \emph{codes}, not raw data: previous prompt behaviour is reconstructed on the fly through $H_{\phi^\star}$. We additionally anchor the shared code with a quadratic penalty $\|z_{\mathrm{inv}} - z_{\mathrm{inv}}^\star\|_2^2$, where $z_{\mathrm{inv}}^\star$ is its pre-task snapshot; in practice this term remains small throughout training and acts as a mild, non-restrictive prior on the shared capacity.

\begin{figure}[t]
  \centering
  \captionsetup{font=small}
  \includegraphics[width=0.9\columnwidth]{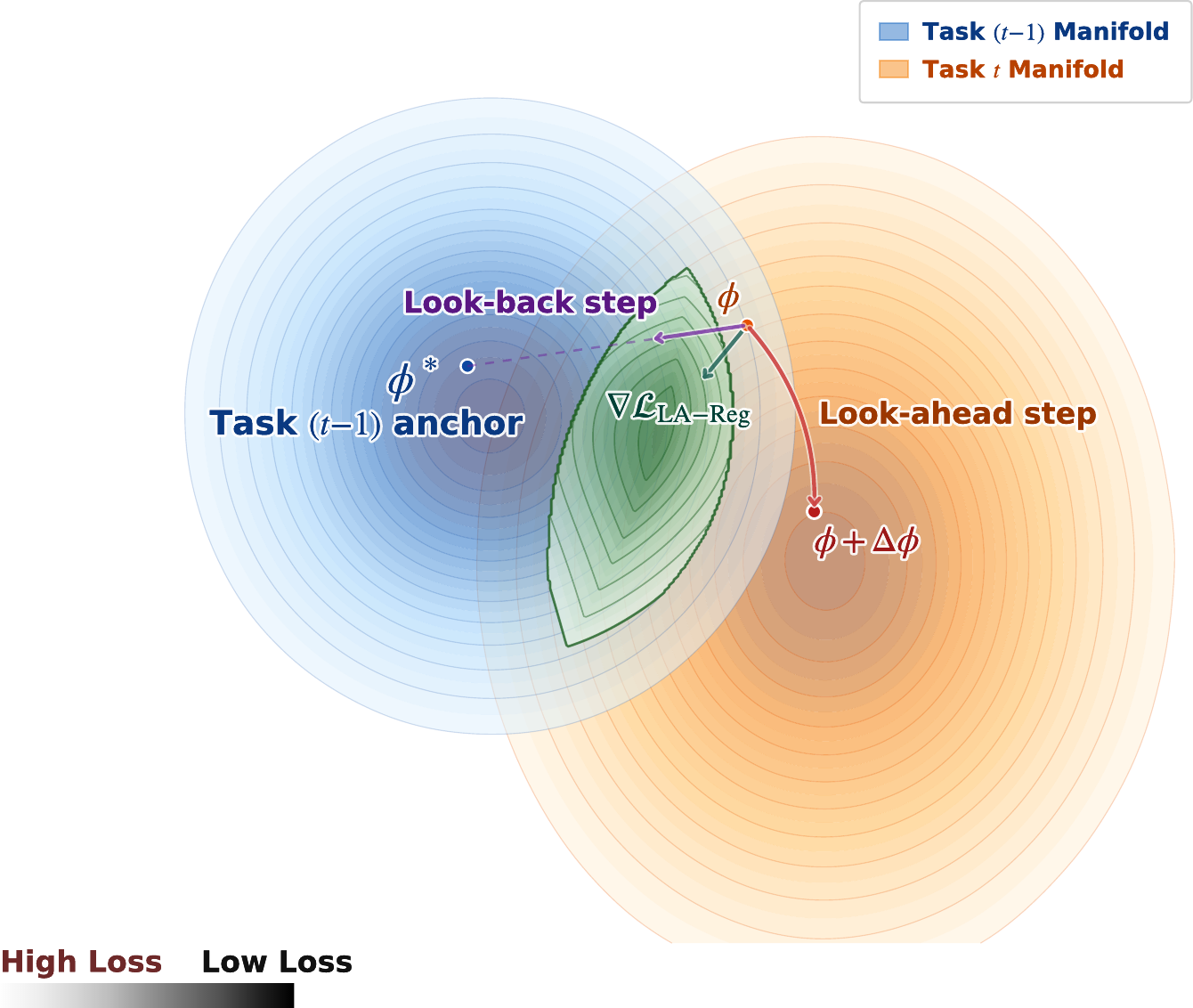}
    \caption{\textbf{Geometry of LA-Reg.} LA-Reg balances progress along the task-$t$ gradient with alignment to the task-$(t\!-\!1)$ anchor, steering optimisation into the shared low-loss basin (green) and yielding flatter cross-task minima. Regularisation acts in prompt-output space, not parameter space.}
  \label{fig:la_reg_geometry}
\end{figure}
\paragraph{Auxiliary Multimodal Supervision.}
Following recent VideoQA work~\cite{tan2025bisecle, cai2024empowering}, we augment $\mathcal{L}_{\mathrm{NLL}}^t$ with two cross-modal auxiliary objectives, $\mathcal{L}_{\mathrm{VAQ}}^t$ and $\mathcal{L}_{\mathrm{QAV}}^t$: the next-token NLL of $Q$ given $(V, A, P)$ and of $V$ given $(Q, A, P)$, respectively. A \emph{causal} view~\cite{pearl2009causality, nguyen2023causal} explains their relative weighting. Under the VideoQA graph $V\!\rightarrow\!Q$, $V\!\rightarrow\!A$, $Q\!\rightarrow\!A$, the conditional $p(Q\mid V, A)$ follows a feasible causal direction and yields an identifiable signal; $p(V\mid Q, A)$, in contrast, is \emph{anti-causal} and underdetermined, as many videos can map to the same $(Q, A)$. Yet the anti-causal loss is not discarded: even as a poor generative target, it still pushes the prompts and the QA-side representation to stay \emph{visually grounded}. We therefore keep $\mathcal{L}_{\mathrm{QAV}}^t$ in the same next-token form but give it a small weight, an asymmetry that ablations in Sec.~\ref{sec:more_ablation_studies} confirm: removing it weakens token learning, while up-weighting it hurts performance.

\paragraph{Full Objective.}
The task-$t$ objective combines the base, regularisation, and auxiliary terms:
\begin{equation}
\label{eq:overall_loss}
\begin{split}
  \mathcal{L}_{\mathrm{Final}}^t
  &= \mathcal{L}_{\mathrm{NLL}}^t
  + \lambda_{\mathrm{LA\text{-}Reg}}\,\mathcal{L}_{\mathrm{LA\text{-}Reg}}^t \\
  &\quad + \lambda_{\mathrm{inv}}\, \|z_{\mathrm{inv}} - z_{\mathrm{inv}}^\star\|_2^2 \\
  &\quad + \lambda_{\mathrm{VAQ}}\,\mathcal{L}_{\mathrm{VAQ}}^t
  + \lambda_{\mathrm{QAV}}\,\mathcal{L}_{\mathrm{QAV}}^t.
\end{split}
\end{equation}
Unless stated otherwise, we fix $\lambda_{\mathrm{LA\text{-}Reg}}\!=\!0.5$, $\lambda_{\mathrm{inv}}\!=\!0.1$, $\lambda_{\mathrm{VAQ}}\!=\!1.0$, and $\lambda_{\mathrm{QAV}}\!=\!0.1$ across all benchmarks, following the causal weighting above. The small $\lambda_{\mathrm{inv}}$ reflects that its underlying penalty stays naturally small across tasks, yet ablations (Sec.~\ref{sec:Experiments}) show it still contributes meaningfully to retention. This single default leaves \model~tuning-free at deployment. Since $\mathcal{L}_{\mathrm{LA\text{-}Reg}}^t$ sums over the $t-1$ stored codes, its magnitude grows along the stream under a fixed weight; averaging over stored codes, equivalently scaling $\lambda_{\mathrm{LA\text{-}Reg}} \propto \tfrac{1}{t-1}$, keeps the per-task strength constant. Our fixed setting nevertheless remains stable over the 13-task stream.

\paragraph{Task Bank and Unknown-Task Inference.}
After task~$t$, we cache only $(k^t, z_{\mathrm{sp}}^t)$ in a compact bank $\mathcal{B}_t = \mathcal{B}_{t-1}\cup\{(k^t, z_{\mathrm{sp}}^t)\}$. At test time, given $(V_{\mathrm{test}}, Q_{\mathrm{test}})$, we compute $r_{\mathrm{test}}$ via Eq.~\eqref{eq:sample_key} and retrieve the nearest task,
\begin{equation}
\label{eq:task_retrieval}
  t^\star = \arg\max_{\tau \le t} \cos(r_{\mathrm{test}}, k^\tau),
\end{equation}
reconstruct $z^{t^\star} = [z_{\mathrm{inv}}, z_{\mathrm{sp}}^{t^\star}]$, and synthesise $P_{\mathrm{test}} = H_\phi(z^{t^\star})$ on the fly. Inference requires \emph{no task identifier}, extending \model~to the more challenging \emph{task-agnostic} continual learning setting where task identity is hidden at test time. Routing accuracy, when nearest-key retrieval errs, and its bounded downstream impact are analysed in Appendix~\ref{app:routing_acc}.

\subsection{Theoretical Analysis}
\label{sec:theory}
We give two complementary views on why \model~resists forgetting: (i) LA-Reg acts as a task-wise sharpness penalty, and (ii) the shared generator confines adaptation to a low-dimensional prompt manifold. Full proofs are in Appendix~\ref{sec:appendix_theoretical_results}.

\paragraph{LA-Reg as a Sharpness Penalty.}
Let $J_\phi(z) \coloneqq \partial H_\phi(z) / \partial \phi$ and $g(\phi) \coloneqq \nabla_\phi \mathcal{L}_{\mathrm{NLL}}^t(\phi)$, so the look-ahead update in Eq.~\ref{eq:la_reg_h} is $\Delta\phi = -\eta\, g(\phi)$.
\begin{theorem}[Task-wise sharpness penalty]
\label{thm:la_reg_sam}
Assume $H_\phi$ is twice differentiable with bounded Jacobian and remainder, $\|H_{\phi^\star}(z^\tau) - H_\phi(z^\tau)\|_2 \le \varepsilon$, and $\|g(\phi)\|_2 \le G$. Then for small $\eta$, there exists $C>0$ such that
\begin{equation}
\label{eq:la_sam_expansion}
\begin{split}
\bigl| \mathcal{L}_{\mathrm{LA\text{-}Reg}}^{t}(\phi) &- \eta^2 \!\sum_{\tau<t}\! \| J_{\phi}(z^\tau)\, g(\phi) \|_2^2 \bigr| \\
&\le C(\varepsilon^2 + \varepsilon\eta + \eta^3).
\end{split}
\end{equation}
\end{theorem}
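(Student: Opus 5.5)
The approach is to expand each summand of $\mathcal{L}_{\mathrm{LA\text{-}Reg}}^t$ around the current parameters $\phi$. The goal is to split the difference $H_{\phi^\star}(z^\tau) - H_{\phi+\Delta\phi}(z^\tau)$ into a drift part and a look-ahead part, each controlled separately. For each stored code $z^\tau$, write
\begin{equation*}
H_{\phi^\star}(z^\tau) - H_{\phi+\Delta\phi}(z^\tau) = a_\tau - b_\tau,
\end{equation*}
where the two pieces are
\begin{equation*}
a_\tau \coloneqq H_{\phi^\star}(z^\tau) - H_{\phi}(z^\tau), \qquad b_\tau \coloneqq H_{\phi+\Delta\phi}(z^\tau) - H_{\phi}(z^\tau).
\end{equation*}
By the stated assumption, $\|a_\tau\|_2\le\varepsilon$, so the drift term is handled directly.

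For the look-ahead term, use the twice-differentiability of $H_\phi$ and take a first-order Taylor expansion with remainder:
\begin{equation*}
b_\tau = J_\phi(z^\tau)\Delta\phi + R_\tau = -\eta\, J_\phi(z^\tau) g(\phi) + R_\tau .
\end{equation*}
The bounded-remainder (bounded second derivative) hypothesis gives $\|R_\tau\|_2 \le \tfrac{1}{2}K\|\Delta\phi\|_2^2 \le \tfrac12 K G^2 \eta^2$. With $\|J_\phi\|\le B$, we also get $\|J_\phi(z^\tau)g\|_2\le BG$, so $\|b_\tau\|_2 = O(\eta)$ for small $\eta$.

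Next, expand the square:
\begin{equation*}
\|a_\tau - b_\tau\|_2^2 = \|b_\tau\|_2^2 - 2\langle a_\tau, b_\tau\rangle + \|a_\tau\|_2^2 .
\end{equation*}
The last term is at most $\varepsilon^2$. By Cauchy--Schwarz, the cross term is at most $2\varepsilon\|b_\tau\|_2 \le 2\varepsilon(BG\eta + \tfrac12KG^2\eta^2)$, which is $O(\varepsilon\eta)$ once $\eta\le 1$. For the leading term,
\begin{equation*}
\|b_\tau\|_2^2 = \eta^2\|J_\phi(z^\tau) g\|_2^2 - 2\eta\langle J_\phi(z^\tau) g, R_\tau\rangle + \|R_\tau\|_2^2 .
\end{equation*}
The middle piece is bounded by $2\eta\cdot BG\cdot \tfrac12KG^2\eta^2 = O(\eta^3)$, and the last by $O(\eta^4)\le O(\eta^3)$ for $\eta\le1$.

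Summing over the $t-1$ stored codes, the difference between $\mathcal{L}^t_{\mathrm{LA\text{-}Reg}}$ and $\eta^2\sum_\tau\|J_\phi(z^\tau)g\|_2^2$ is bounded by $(t-1)\,C'(\varepsilon^2+\varepsilon\eta+\eta^3)$. Absorbing $t-1$, $B$, $G$ and $K$ into $C$ gives the stated bound.

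The main obstacle is making the remainder bound rigorous. It requires the Hessian of $H_\phi(z^\tau)$ with respect to $\phi$ to be bounded uniformly on the segment $[\phi,\phi+\Delta\phi]$. I would read the ``bounded remainder'' hypothesis as exactly this. A second caveat is the case of $M>1$ inner steps: there $\Delta\phi = -\eta\sum_m g(\phi_m)$, and I would reduce it to the single-step form $-M\eta\, g(\phi)+O(\eta^2)$ by Lipschitz continuity of $g$, rescaling $\eta$. Otherwise I would simply state the result for the one-step look-ahead, matching the definition $\Delta\phi=-\eta g(\phi)$ given before the theorem.
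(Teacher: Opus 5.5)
Your proof is correct and follows essentially the same route as the paper's. Both arguments Taylor-expand $H_{\phi+\Delta\phi}(z^\tau)$ about $\phi$ with a quadratic remainder, bound the cross terms by Cauchy--Schwarz using $\|H_{\phi^\star}(z^\tau)-H_\phi(z^\tau)\|_2\le\varepsilon$, $\|J_\phi(z^\tau) g(\phi)\|_2=O(G)$ and a remainder of size $O(\eta^2G^2)$, and then use $\eta\le 1$ to collapse everything into $C(\varepsilon^2+\varepsilon\eta+\eta^3)$ with $C$ proportional to $t-1$. The differences are cosmetic. The paper keeps the linear term $J_\phi\Delta\phi$ and the remainder as separate pieces $a_\tau+b_\tau+r_\tau$, whereas you fold the remainder into $b_\tau$ before expanding the square. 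Like you, the paper only treats the single-step update $\Delta\phi=-\eta\, g(\phi)$.
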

The leading term $\eta^2 \sum_{\tau<t} \|J_\phi(z^\tau) g(\phi)\|_2^2$ flags ``sharp'' directions where fitting task $t$ erodes past behaviour. LA-Reg penalises them (Fig.~\ref{fig:la_reg_geometry}), encouraging flatter cross-task basins. Theorem~\ref{thm:la_reg_sam} is a \emph{first-order characterisation} of what LA-Reg penalises, a cross-task sharpness term on the generator parameters $\phi$, rather than an anti-forgetting bound. The connection to sharpness-aware minimisation~\cite{foret2021sharpnessaware} is correspondingly scoped: both penalise sensitivity to a gradient-aligned parameter perturbation, but SAM perturbs all model parameters with respect to the training loss, whereas LA-Reg perturbs only $\phi$ and measures the induced drift of past-task prompt outputs.

\paragraph{Low-Rank Adaptation on a Shared Manifold.}
Local updates to a generated prompt $H_\phi(z^t)$ lie in the range of the Jacobian $J_z H_\phi(z^t)$, whose dimension is bounded by $N_z C_z$, the task-code budget. Since $N_z C_z \!\ll\! L_p N_p C$, \model~implicitly restricts adaptation to a low-dimensional shared manifold inside the much larger prompt space, with task codes providing task-specific control and the shared generator biasing prompts toward reusable cross-task structure. A formal statement and proof are deferred to Appendix~\ref{sec:appendix_manifold}.

\section{Experiments}
\label{sec:Experiments}

% Add error bars

% Analysising the impact of task order

% We briefly summarise the main details of our three experimental setups; further details are provided in Appendix~\ref{sec:more_exp_details}.

% {47.91} & \best{4.60}

% \rowcolor{black!3}\textbf{\model~ \venue{ours}} 
% & {56.99} & {5.18} & {68.94} & {9.50} & \textbf{--} & \textbf{--}
% & \textemdash & \textemdash & \textemdash & \textemdash
% & \textemdash & \textemdash & \textemdash & \textemdash & \textemdash & \textemdash \\
% \rowcolor{black!3}\textbf{\model~ + vaq loss \venue{ours}} 
% & \runner{62.75} & \runner{3.44} & \textbf{--} & \textbf{--} & \textbf{--} & \textbf{--}
% & \textemdash & \textemdash & \textemdash & \textemdash
% & \textemdash & \textemdash & \textemdash & \textemdash & \textemdash & \textemdash \\
% \rowcolor{black!3}\textbf{\model~ + MeLA-Reg \venue{ours}} 
% & {58.88} & {4.00} & \textbf{--} & \textbf{--} & \textbf{--} & \textbf{--}
% & \textemdash & \textemdash & \textemdash & \textemdash
% & \textemdash & \textemdash & \textemdash & \textemdash & \textemdash & \textemdash \\
% \rowcolor{black!3}\textbf{\model~ + MeLA-Reg + vaq loss \venue{ours}} 
% & \best{64.40} & \best{3.13} & \textbf{--} & \textbf{--} & \textbf{--} & \textbf{--}
% & \textemdash & \textemdash & \textemdash & \textemdash
% & \textemdash & \textemdash & \textemdash & \textemdash & \textemdash & \textemdash \\

\subsection{Continual VideoQA}

\subsubsection{Setup}

\paragraph{Datasets and Settings.}
We evaluate on two continual VideoQA benchmarks. Following~\cite{tan2025bisecle}, NExT-QA~\cite{xiao2021next} is split into eight tasks by question type (\texttt{TP}, \texttt{CW}, \texttt{DC}, \texttt{TC}, \texttt{DL}, \texttt{DO}, \texttt{TN}, \texttt{CH}), spanning causal, descriptive, and temporal categories; following~\cite{cai2024empowering}, DramaQA~\cite{choi2021dramaqa} is split in the order What\,$\to$\,Who\,$\to$\,Where\,$\to$\,How\,$\to$\,Why. We further introduce a \emph{continual-dataset} setting that chains the two benchmarks as a longer \emph{domain-incremental} stream, and report \emph{zero-shot performance} on unseen datasets to probe robustness under test-time distribution shift.

\paragraph{Baselines.}
We compare against representative PEA methods for LLM-based continual VideoQA: Bisecle~\cite{tan2025bisecle}, DAM~\cite{cheng2025dam}, ColPro~\cite{cai2024empowering}, ProgPrompt~\cite{razdaibiedina2023progressive}, LAE~\cite{gao2023unified}, DualPrompt~\cite{wang2022dualprompt}, and L2P~\cite{wang2022learning}, plus a na\"ive LLaMA-Adapter~\cite{zhang2024llamaadapter} reference.

\paragraph{Metrics.}
We report two standard metrics: average accuracy \ACC{} (mean top-1 across tasks after the sequence) and average forgetting \FOG{} (mean drop from each task's peak to its final accuracy).

\paragraph{Implementation.}
All methods share a frozen backbone: LLaMA-2-7B~\cite{touvron2023llama} as the LLM and CLIP ViT-L/14~\cite{radford2021learning, dosovitskiy2021an} as the visual encoder. Unless stated otherwise, we use LLaMA-Adapter~\cite{zhang2024llamaadapter} with 32 adapter layers. Following~\cite{tan2025bisecle, cai2024empowering}, each video is sampled to 10 frames at $224\!\times\!224$; text is padded/truncated to 128 tokens on NExT-QA and 280 on DramaQA. We train five epochs on NExT-QA and six on DramaQA, with two look-ahead steps for LA-Reg. Full details are in Appendix~\ref{sec:additional_exp_details}.

\begin{table}[!t]
\centering
\caption{\emph{Continual VideoQA} results on 
\NextQAcap~and \DramaQcap~with per-dataset accuracy and forgetting. 
\bestcap{Bold} and \runnercap{underline} denote the best and second-best results.}
\label{tab:videoqa_results_final}

\begingroup
% Make \venue smaller ONLY inside this table
\renewcommand{\venue}[1]{\textcolor{black!60}{\textsf{(\fontsize{6}{6.8}\selectfont #1)}}}

\fontsize{7}{8}\selectfont
\setlength{\tabcolsep}{2.5pt}
\renewcommand{\arraystretch}{0.9}

\resizebox{0.5\textwidth}{!}{%
\begin{tabular}{L{3.0cm}*{4}{c}}
\toprule
\textbf{Method ~\venue{Venue}} &
\hdr{NextQA}{NExT\mbox{-}QA} &
\hdr{DramaQ}{DramaQA} \\
& \HACC & \HFOG & \HACC & \HFOG \\
\midrule
\rowcolor{black!3}{LLaMA-Adapter ~\venue{ICLR'24}} 
& 46.58 & 13.83 & 60.99 & 24.39 \\
\midrule
L2P ~\venue{CVPR'22}        
& 48.82 & 12.25 & 62.50 & 20.67 \\
\rowcolor{black!3}DualPrompt ~\venue{ECCV'22} 
& 50.62 & 11.74 & 65.89 & 17.93 \\
LAE ~\venue{ICCV'23}         
& 49.38 & 11.47 & 65.82 & 17.35 \\
\rowcolor{black!3}ProgPrompt ~\venue{ICLR'23}  
& 53.95 & 10.69 & 67.92 & 14.95 \\
ColPro ~\venue{EMNLP'24}     
& {55.14} & {7.43} & {71.24} & {12.64} \\
\rowcolor{black!3}DAM ~\venue{WACV'25}        
& 53.88 &  9.99 & 67.37 & 15.19 \\
Bisecle ~\venue{NeurIPS'25}     
& \runner{62.37} & \runner{5.34} & \runner{71.49} & \runner{10.37} \\
\midrule
\rowcolor{black!3}\textbf{\model~\venue{ours}} 
& \best{64.11} & \best{4.76} & \best{72.52} & \best{9.76} \\
\bottomrule
\end{tabular}
}% end resizebox

\endgroup
\end{table}
\subsubsection{Results}

\paragraph{Performance Comparison.}
Table~\ref{tab:videoqa_results_final} shows that \model~sets a new state of the art on both benchmarks, achieving the highest \ACC{} and lowest \FOG{} across the board. On NExT-QA, \model~lifts \ACC{} by $+1.74$ over the previous best (Bisecle) while cutting \FOG{} by $0.58$, a non-trivial gain when both metrics are averaged over eight sequential tasks. The advantage carries over to DramaQA, a domain with markedly different question distributions, where \model~surpasses Bisecle by $+1.03$ \ACC{} and $-0.61$ \FOG{} and outperforms strong prompt-based baselines such as ProgPrompt and ColPro by even larger margins. These consistent gains underscore the effectiveness of controlled prompt synthesis via the shared token hypernetwork.

\paragraph{\ACC~and \FOG~Changes.}
Fig.~\ref{fig:avg-acc-for} tracks \ACC{} and \FOG{} after each task on NExT-QA. \model~consistently leads Bisecle throughout the sequence, with the gap widening as tasks accumulate. The largest single-task gain reaches $+5.07$ \ACC{} (DC) and $-2.75$ \FOG{} (TN), confirming that \model~not only learns new tasks better but also forgets less along the way.

\paragraph{Continual-dataset Setting.}
We further evaluate a more demanding setting that chains NExT-QA and DramaQA into a single $13$-task stream, introducing both a \emph{longer sequence} and a substantial \emph{domain shift} (real-world $\rightarrow$ movie videos). Table~\ref{tab:continual_dataset} reports \ACC{} and \FOG{} averaged over all $13$ tasks: \model~outperforms all baselines by a clear margin, confirming its robustness under extended task streams and cross-domain transitions.

\begin{table}[!t]
\centering
\caption{\emph{Continual-dataset} results on the joint NExT-QA $\rightarrow$ DramaQA stream ($13$ tasks total). We report \ACC{} and \FOG{} averaged across all tasks. \bestcap{Bold} and \runnercap{underline} denote the best and second-best results.}
\label{tab:continual_dataset}
\begingroup
\renewcommand{\venue}[1]{\textcolor{black!60}{\textsf{(\fontsize{6}{6.8}\selectfont #1)}}}
\fontsize{7}{8}\selectfont
\setlength{\tabcolsep}{8.0pt}
\renewcommand{\arraystretch}{1.2}
\begin{tabular}{L{3.0cm}cc}
\toprule
\textbf{Method ~\venue{Venue}} & \HACC & \HFOG \\
\midrule
\rowcolor{black!3}ProgPrompt~\venue{ICLR'23}   & 48.43 & \runner{11.88} \\
ColPro~\venue{EMNLP'24}                        & 54.18 & 12.49 \\
\rowcolor{black!3}Bisecle~\venue{NeurIPS'25}   & \runner{55.31} & {13.30} \\
\midrule
\textbf{\model~\venue{ours}}                   & \best{61.96} & \best{11.72} \\
\bottomrule
\end{tabular}
\endgroup
\end{table}

\paragraph{Zero-shot Inference}
We probe robustness on \emph{unseen} datasets at test time, which also tests our routing mechanism under out-of-distribution inputs. Each method's checkpoint trained on one benchmark is evaluated zero-shot on the other (NExT-QA $\leftrightarrow$ DramaQA). From Table~\ref{tab:main_zeroshot}, \model~generalises most robustly against baselines.

\begin{table}[!t]
\centering
\caption{\emph{Zero-shot inference} results across NExT-QA and DramaQA: train on one benchmark, evaluate on the other. We report top-1 accuracy. \bestcap{Bold} and \runnercap{underline} denote the best and second-best results.}
\label{tab:main_zeroshot}
\begingroup
\renewcommand{\venue}[1]{\textcolor{black!60}{\textsf{(\fontsize{6}{6.8}\selectfont #1)}}}
\fontsize{7}{8}\selectfont
\setlength{\tabcolsep}{3.0pt}
\renewcommand{\arraystretch}{1.2}
\begin{tabular}{L{3.0cm}cc}
\toprule
\textbf{Method ~\venue{Venue}} &
\textbf{OOD on NExT-QA} &
\textbf{OOD on DramaQA} \\
\midrule
\rowcolor{black!3}ProgPrompt ~\venue{ICLR'23}
& \runner{30.55} & {39.24} \\
ColPro ~\venue{EMNLP'24}
& 27.73 & \runner{51.47} \\
\rowcolor{black!3}Bisecle ~\venue{NeurIPS'25}
& 27.86 & 49.07 \\
\midrule
\textbf{\model~\venue{ours}}
& \best{36.29} & \best{52.83} \\
\bottomrule
\end{tabular}
\endgroup
\end{table}

\subsubsection{Analysis and Ablation Studies}

% \paragraph{HyperToken (S/M/L): Scaling PEA continual learning} We analyse performance of \model~ across different size of networks.

\begin{table}[t]
\centering
\caption{\emph{Contribution of loss terms} on NExT-QA. $\mathcal{L}_{\mathrm{Inv}}$ denotes invariance regularisation and $\mathcal{L}_{\mathrm{Aux}}$ denotes the weighted sum of auxiliary objectives $\mathcal{L}_{\mathrm{VAQ}}$ and $\mathcal{L}_{\mathrm{QAV}}$.}
\label{tab:ablate_losses_ticks}

\begingroup
\fontsize{7}{8}\selectfont
\setlength{\tabcolsep}{8.0pt}
\renewcommand{\arraystretch}{1.1}

\begin{tabular}{ccc!{\vrule width 0.95pt}cc}
\toprule
$\mathcal{L}_{\text{LA-Reg}}$ &
$\mathcal{L}_{\text{Inv}}$ &
$\mathcal{L}_{\text{Aux}}$ &
\HACC & \HFOG \\
\midrule

\rowcolor{black!3}
\xmark & \xmark & \xmark & 21.28 & 16.46 \\
\xmark & \xmark & \cmark & 54.54 & 8.83 \\

\rowcolor{black!3}
\cmark & \xmark & \xmark & 62.12 & 5.46 \\
\midrule
\cmark & \cmark & \xmark & 62.41 & 5.19 \\

\rowcolor{black!3}
\cmark & \xmark & \cmark & \runner{63.79} & \runner{4.88} \\
\midrule
\cmark & \cmark & \cmark & \best{64.11} & \best{4.76} \\

\bottomrule
\end{tabular}
\endgroup

\end{table}

\paragraph{Loss contribution.}
Table~\ref{tab:ablate_losses_ticks} ablates the three training terms of \model~on NExT-QA. The largest gain comes from $\mathcal{L}_{\mathrm{LA}}$: adding it alone (row 3) lifts \ACC{} from $21.28$ to $62.12$ and cuts \FOG{} by over $3\times$, confirming look-ahead regularisation as the principal driver of anti-forgetting. The auxiliary objective $\mathcal{L}_{\mathrm{Aux}}$ adds a complementary $+1.67$ \ACC{} and $-0.58$ \FOG{}, pushing the model into the state-of-the-art regime by enriching cross-modal grounding. The invariance term $\mathcal{L}_{\mathrm{Inv}}$ contributes a smaller but consistent improvement that stabilises the shared code. The full objective is best on both metrics, indicating the three components are complementary rather than redundant.

% On NeXT-QA, loss analysed: L_ctr, LA-Reg and $\omega$-Reg, L_Ques, L_Vid, L_Tok

\paragraph{Effect of look-ahead steps.}
Table~\ref{tab:ablation_look_ahead} varies look-ahead steps in Eq.~\ref{eq:la_reg_h} from $0$ to $2$. Both \ACC{} and \FOG{} improve monotonically with more steps ($54.54 \to 62.30 \to 63.92 \to 64.11$ \ACC{}), confirming that look-ahead updates strengthen anti-forgetting by anticipating current-task drift on past behaviour.

\begin{table}[t]
\centering\small
\setlength{\tabcolsep}{3.0pt}
\begin{tabular}{lccc}
\toprule
 & Bisecle & ProgPrompt & \model{} \\
\midrule
Storage / task & $\approx$16\,KB & $\approx$5\,MB & $\approx$26\,KB \\
Latency (ms/query) & 47.7 & 48.4 & 49.3 \\
\bottomrule
\end{tabular}
\caption{\textbf{Deployment cost.} Per-task storage and inference latency (batch $=1$). Training cost is analysed in Appendix~\ref{app:efficiency}.}
\label{tab:deploy_cost}
\end{table}

\paragraph{Compute, memory, and inference cost.}
Table~\ref{tab:deploy_cost} reports the two costs that recur at deployment. Per-task storage is a compact code of ${\sim}26$\,KB and stays \emph{constant} along the stream, whereas ProgPrompt materialises a prompt per task and its memory grows with the sequence ($21.16\!\to\!25.43$\,GB on NExT-QA; Table~\ref{tab:compute_cost}). Inference latency is essentially identical across methods (within ${\sim}3\%$), since prompt synthesis is a single forward pass through the small generator. Training cost is higher for \model{}, owing to the look-ahead inner loop, but this overhead is confined to training and does not recur at deployment (Appendix~\ref{app:efficiency}); moreover, the $1$-step variant halves training time ($5.55\!\to\!2.46$\,h), trims peak memory by ${\sim}20\%$, and still surpasses Bisecle ($63.92$ vs $62.37$ \ACC{}, $5.02$ vs $5.34$ \FOG{}), making it a practical default.

\paragraph{Token Analysis.}
Figure~\ref{fig:token_corr} (middle/right) visualises task-specific prompt tokens at the silhouette-optimal mid layers ($i = 12$ on NExT-QA, $i = 9$ on DramaQA). Across both datasets, the layout suggests a \emph{skill-oriented organisation} that does not strictly follow the dataset-defined taxonomies: tasks demanding similar reasoning mechanisms tend to cluster together (e.g., DL/TC for spatial-temporal grounding on NExT-QA; DO/DL for visual grounding on DramaQA), while semantically distinct tasks (CW, TP) appear as relative outliers. The clusters remain reasonably compact and well separated, consistent with \model{} retaining discriminative task representations across the continual training trajectory and providing qualitative evidence of mitigated catastrophic forgetting. A more detailed cluster-level analysis is provided in Appendix~\ref{app:token_analysis}.

%Interestingly, same group form closing to each other. 

% Choose layer 16, 24, 32, plot token distribution of every task at the end of training. 
% Store checkpoints, task code at the end is ok!

% Preamble (if not already):
% \usepackage{booktabs}
% \usepackage{graphicx}
% \usepackage{xcolor,colortbl}

% Preamble:
% \usepackage{caption} % for \captionof
% \usepackage{booktabs}
% \usepackage{xcolor,colortbl}
% \usepackage{graphicx}

\begin{figure*}[t]
\centering
\setlength{\tabcolsep}{6pt}
\renewcommand{\arraystretch}{1.05}
\fontsize{7}{8}\selectfont

% --- LEFT: two plots ---
\begin{minipage}[t]{0.81\textwidth}
  \vspace{0pt}
  \centering
  \begin{minipage}[t]{0.49\linewidth}
    \vspace{0pt}
    \centering
    \includegraphics[width=\linewidth]{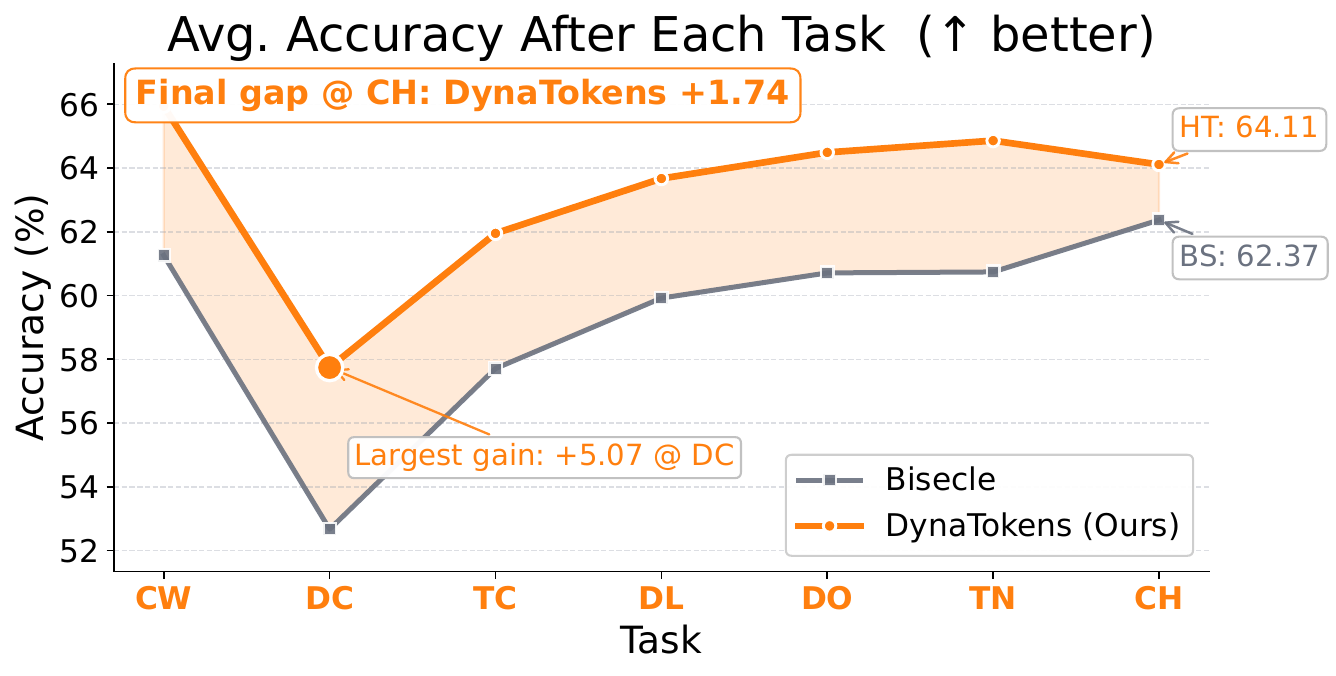}
  \end{minipage}\hfill
  \begin{minipage}[t]{0.49\linewidth}
    \vspace{0pt}
    \centering
    \includegraphics[width=\linewidth]{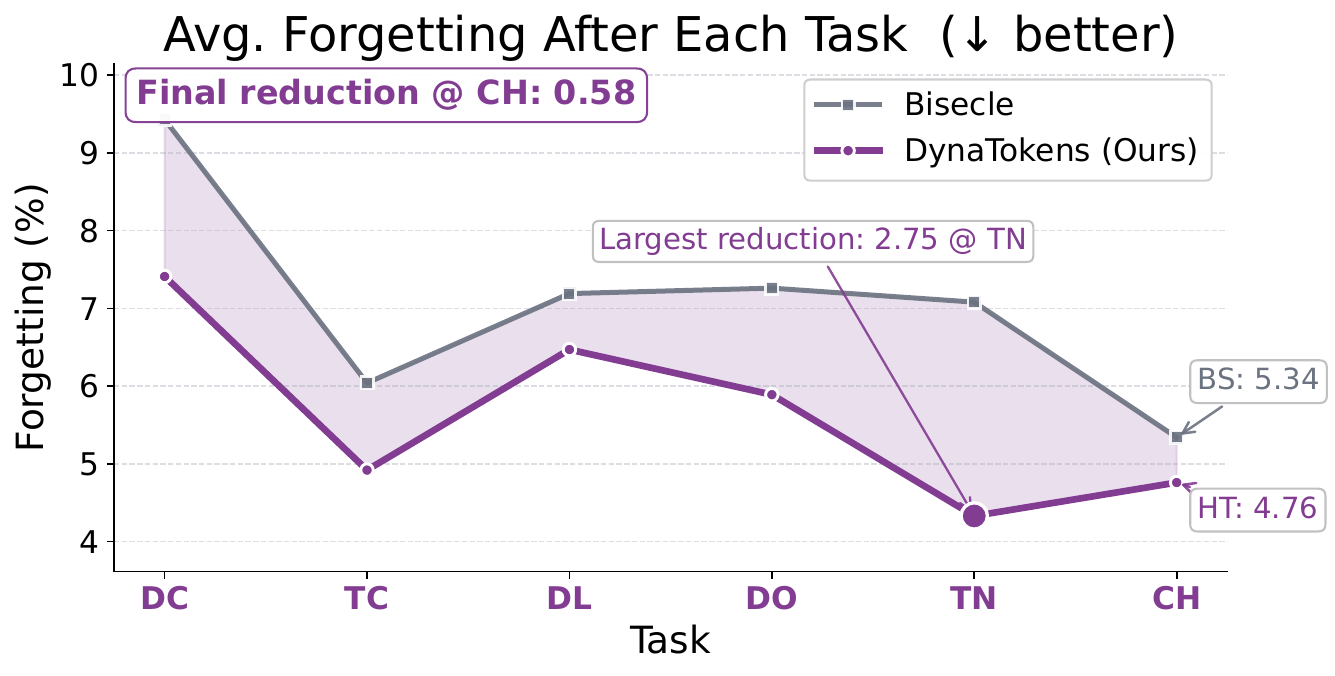}
  \end{minipage}

  \vspace{0.35em}
    \captionof{figure}{\model~consistently surpasses Bisecle in average accuracy and forgetting across tasks. TP is excluded from the accuracy plot (uniformly low at start) and TP--CW from the forgetting plot (forgetting starts at DC).}
  \label{fig:avg-acc-for}
\end{minipage}\hfill
% --- RIGHT: table ---
\begin{minipage}[t]{0.18\textwidth}
  \vspace{0pt}
  \vspace{8mm}
  \centering
\begin{tabular}{c cc}
  \toprule
  \textbf{Steps} & \textbf{Acc.} $\uparrow$ & \textbf{For.} $\downarrow$ \\
  \midrule
  \rowcolor{black!3} \xmark & 54.54  & 8.83 \\
  0 & 62.30 & 5.40 \\
  \rowcolor{black!3} 1 & \runner{63.92} & \runner{5.02} \\
  2 & \best{64.11} & \best{4.76} \\
  \bottomrule
\end{tabular}

  \captionof{table}{\emph{Effect of look-ahead steps} on NExT-QA.}
  \label{tab:ablation_look_ahead}
\end{minipage}

\end{figure*}

\begin{figure*}[t]
  \centering
  \includegraphics[width=0.47\textwidth]{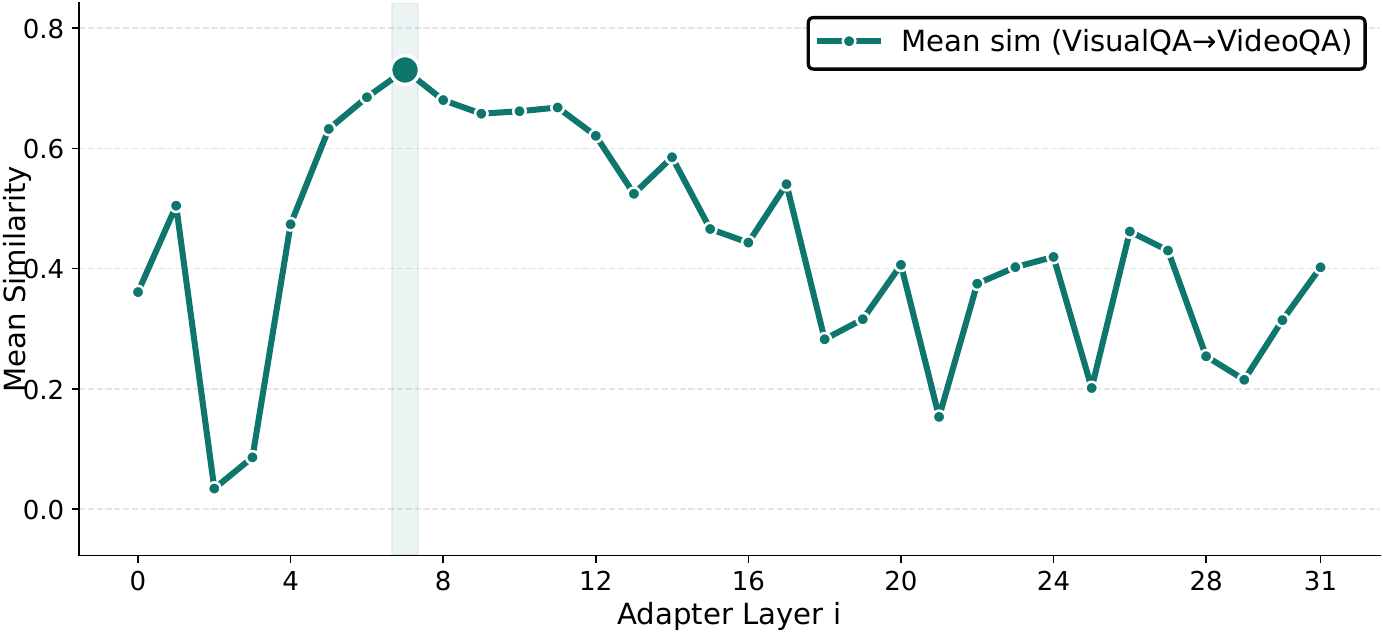}\hfill
  \includegraphics[width=0.26\textwidth]{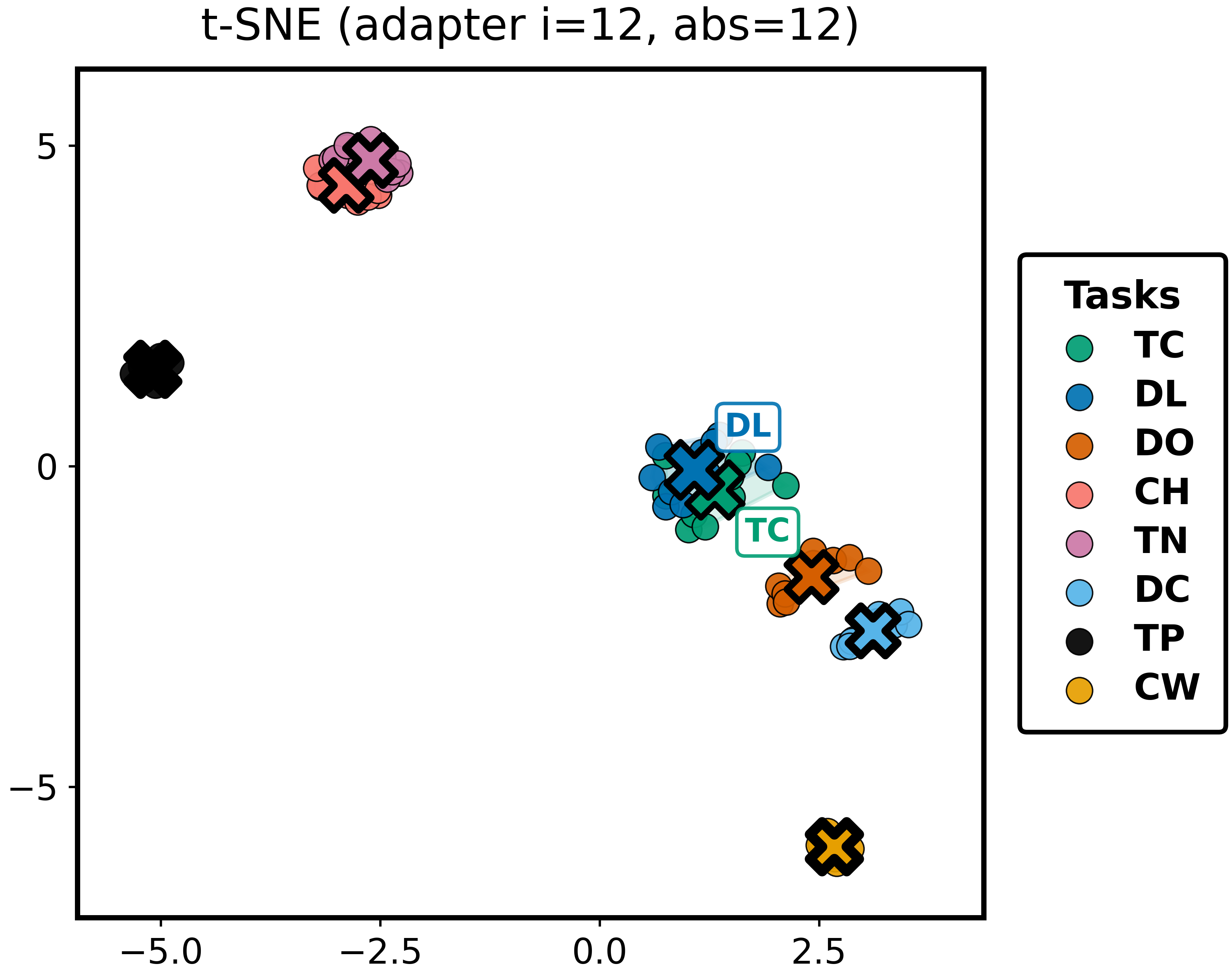}\hfill
  \includegraphics[width=0.26\textwidth]{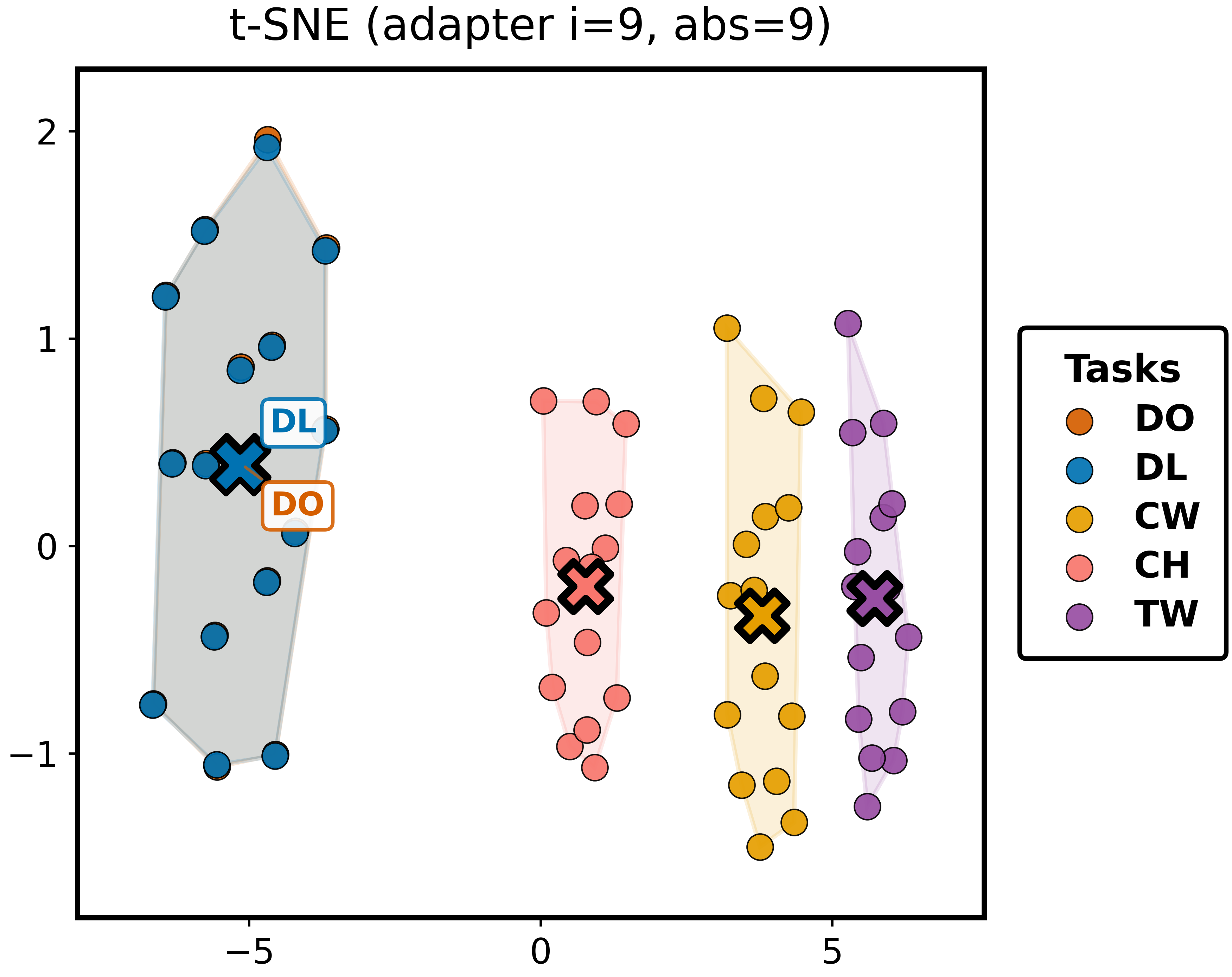}

  \caption{\textbf{Token analysis.}
    \textbf{Left}: Mean image--video token similarity across adapter layers after Visual7W$\rightarrow$NExT-QA continual training.
    \textbf{Middle/Right}: t-SNE visualisations of task-specific prompt tokens at the semantically most-discriminative mid layer ($i=12$ for NExT-QA, $i=9$ for DramaQA).}
  \label{fig:token_corr}
\end{figure*}

\subsection{ImageQA $\rightarrow$ VideoQA}
\label{subsec:main_imageqa_videoqa}

\subsubsection{Setup}

\paragraph{Dataset.}
We use Visual7W~\cite{zhu2016visual7w} (28K images, 140K QA pairs) as the ImageQA source and pair it with NExT-QA to form an ImageQA\,$\rightarrow$\,VideoQA continual sequence: we first pretrain offline on Visual7W, then continue under the original 8-task NExT-QA protocol, probing how image-level pretraining affects downstream video reasoning under continual adaptation.

% Dataset:
% - Visual7W: we split 80/20 for train/val 
% - 28653 images 
% Raw images: 28,653
% Raw QA pairs: 139,868
% After drop_empty images: 28,653
% After drop_empty QA pairs: 139,868
% ------------------------------------------
% Train images: 22,922 | Train QA pairs: 111,913
% Val   images: 5,731 | Val   QA pairs: 27,955

%Climb -> NExT-QA, Climb -> STAR

\paragraph{Baselines \& Metrics.}
LLM-based PEA baselines have not been evaluated in this setting to our knowledge. We compare \model~against the next-best baseline Bisecle, treating each image as a single-frame video by duplicating its visual tokens to match the video token length, which yields a unified continual fine-tuning protocol across image and video tasks.

\paragraph{Implementation.}
We retain the backbone and hyperparameters from the NExT-QA setup. On Visual7W, we fine-tune for 10 epochs with $\mathcal{L}_{\mathrm{NLL}}$ and $\mathcal{L}_{\mathrm{VAQ}}$ only, disabling video-specific auxiliary losses since static images require no temporal reasoning. We use an 80/20 train--validation split.

\subsubsection{Results}

Table~\ref{tab:v7w_to_nextqa_results} shows that \model~outperforms Bisecle on Visual7W\,$\rightarrow$\,NExT-QA in both \ACC{} and \FOG{}, achieving $43.44$ vs $41.90$ on Visual7W and $62.31$ vs $58.24$ on NExT-QA, with consistently lower forgetting. More importantly, continual ImageQA\,$\rightarrow$\,VideoQA is markedly harder than VideoQA alone and tends to induce \emph{slight negative transfer}: Bisecle's NExT-QA accuracy collapses from $62.37$ (VideoQA-only) to $58.24$ ($-4.13$), whereas \model~degrades only mildly ($64.11 \rightarrow 62.31$, $-1.80$), retaining over half of the gap as a clear advantage.

We attribute this robustness to a depth-wise specialisation mismatch revealed in Fig.~\ref{fig:token_corr}~(left). Prompt similarity between ImageQA and VideoQA tasks peaks around layers $5$--$11$ (mean $\approx 0.7$) and then steadily declines toward the final layers (mean $\approx 0.3$), indicating that mid-layer semantics are largely shared while \emph{late layers specialise for temporal reasoning}. Strictly anchoring late-layer behaviour to the static-scene prior learned from ImageQA would therefore freeze the very plasticity that VideoQA most needs. Guided by this observation, we relax LA-Reg on the ImageQA task during the ImageQA\,$\rightarrow$\,VideoQA transition, letting the generator reshape temporal-specific layers while the shared task code preserves reusable mid-layer structure. This selective relaxation is uniquely enabled by \model's layer-conditioned generation and underlies its consistent advantage under cross-modal continual transfer.

% Table~\ref{tab:v7w_to_nextqa_results} shows \model~ clearly outperforms Bisecle on Visual7W$\rightarrow$NExT-QA in both \ACC~and \FOG. Importantly, continual ImageQA$\rightarrow$VideoQA is harder than continual VideoQA alone and can induce negative transfer: Bisecle drops sharply (e.g., $62.37\%\rightarrow55.32\%)$, while \model~ degrades only mildly (4.68\% accuracy).

% We hypothesise this comes from a depth-wise mismatch: ImageQA pretraining strengthens shared mid-layer semantics, but introduces a static-scene bias and reduces late-layer plasticity needed for temporal reasoning. Our token analysis (Fig.~\ref{fig:token_corr} left) supports this: ImageQA and VideoQA prompt similarity peaks in the mid layers and then declines toward the final layers, where temporal specialisation emerges. To avoid freezing temporal learning, we disable forgetting constraints during transfer; even then, \model~ is more robust to the task switch, retaining over $7\%$ higher Visual7W accuracy at the end of continual training.

\begin{table}[!t]
\centering
\caption{\emph{Continual ImageQA $\rightarrow$ VideoQA} results on \VisualWcap $\rightarrow$ \NextQAcap, with accuracy and forgetting.
\bestcap{Bold} and \runnercap{underline} denote the best and second-best results.}
\label{tab:v7w_to_nextqa_results}

\begingroup
% Make \venue smaller ONLY inside this table
\renewcommand{\venue}[1]{\textcolor{black!60}{\textsf{(\fontsize{6}{6.8}\selectfont #1)}}}

\fontsize{7}{8}\selectfont
\setlength{\tabcolsep}{2.5pt}
\renewcommand{\arraystretch}{1.2}

\begin{tabular}{L{3.5cm}*{4}{c}}
\toprule
\textbf{Method ~\venue{Venue}} &
\multicolumn{2}{c}{\cellcolor{V7WHeaderBG}\rule{0pt}{2.2ex}\textbf{Visual7W}} &
\hdr{NextQA}{NExT\mbox{-}QA} \\
\cmidrule(lr){2-3}\cmidrule(lr){4-5}
& \HACC & \HFOG & \HACC & \HFOG \\
\midrule
\rowcolor{black!3}Bisecle ~\venue{NeurIPS'25}
& 41.90 & 28.89 & 58.24 & 5.93 \\
\textbf{\model~ ~\venue{ours}}
& \best{43.44} & \best{27.92} & \best{62.31} & \best{5.02} \\
\bottomrule
\end{tabular}

\endgroup
\end{table}

% Give qualitative examples

% ImageQA -> VideoQA then test on Video, Visual
% VideoQA -> ImageQA then test on Visual, Video

\section{Conclusion}
\label{sec:Conclusion}

We presented \model{}, a transformer-based token hyper-generator for continual VideoQA that synthesises fine-tuning tokens on demand, enabling \emph{explicit, memory-bounded} adaptation with a fixed budget. To mitigate forgetting, we introduced a look-ahead regulariser that anchors the generator to prior tasks, with a sharpness-aware interpretation explaining its improved retention via flatter cross-task minima. A causal view of VideoQA further guided the weighting of auxiliary objectives to strengthen cross-modal alignment without anti-causal drift. Across standard continual VideoQA benchmarks, \model~consistently improves accuracy and reduces forgetting, providing a strong foundation for lifelong multimodal learning.

\section*{Limitations}
\paragraph{No positive cross-modal transfer.}
On ImageQA$\rightarrow$VideoQA, \model{} suffers far less negative transfer than the strongest baseline ($-1.80$ vs $-4.13$ \ACC{}), but image pretraining does not yet yield a net gain on video reasoning, which remains open for any PEA framework.

\paragraph{Training cost.}
The look-ahead inner loop increases training time and peak memory relative to simpler PEA baselines (Table~\ref{tab:compute_cost}). The overhead does not recur at deployment and the $1$-step variant recovers most of it, but training remains more expensive.

\paragraph{Scope of distribution shift.}
The standard NExT-QA and DramaQA splits are question-type-incremental. Our domain- and modality-incremental protocols go further, but streams with broader visual and temporal drift remain to be evaluated.

\paragraph{Routing and horizon.}
Nearest-key retrieval is less accurate on visually homogeneous streams (e.g., $41.7\%$ on DramaQA); its downstream effect is small by design ($0.50$ \ACC{} below oracle routing) but not eliminated. LA-Reg scales linearly with the number of stored codes, which is negligible at our scale and can be held constant by subsampling codes (Appendix~\ref{app:efficiency}); our streams span up to $13$ tasks, and longer horizons are left for future work.

\paragraph{Future work.}
Extensions include audio-visual reasoning, embodied perception, and long-horizon multi-task agents, where compact task-code memory and on-demand prompt synthesis suit heterogeneous task streams.

\section*{Acknowledgements}
This work was supported by the US Army International Technology Center Pacific (ITC-IPAC) under Contract No.\ FA520923C0020. The authors acknowledge ResetData for computational support, the National Computational Infrastructure (NCI Australia), an NCRIS-enabled capability supported by the Australian Government, for the computational resources used in this study, and the Katana computational cluster, supported by Research Technology Services at UNSW Sydney.

\newpage

% Custom bibliography entries only
\bibliography{hypertokens}

\newpage

\appendix
\section{Extended Preliminaries}
\subsection{LLaMA-Adapter}
\label{sec:extended_background_pea}
Following prior work~\cite{tan2025bisecle, cai2024empowering}, we adopt \emph{LLaMA-Adapter}~\cite{zhang2024llamaadapter} for parameter-efficient continual adaptation of a frozen multimodal LLM on VideoQA. For each task $t$, we introduce a small set of adapter tokens at selected transformer layers,
\begin{equation}
P_l^{t}\in\mathbb{R}^{N_{p}\times C},\quad l=1,\dots,L_{p},
\end{equation}
injected into the frozen backbone and optimised under the objective in Eq.~\ref{eq:nll_sum_ht}. In \model, the token generator $H_\phi$ synthesises layer-wise adapter tokens $\{P_l^t\}$ from compact task codes, so continual fine-tuning reduces to updating $\phi$ rather than maintaining per-task adapter banks.

\paragraph{Gated prefix attention.}
At layer $l$, let $X_l \in \mathbb{R}^{N \times C}$ be the hidden sequence of the frozen backbone (text tokens, preceded by visual tokens from a projector $W_{\mathrm{vis}}$ shared across tasks in the multimodal case). Each token in $X_l$ attends to two sources: the other tokens of $X_l$, as in the original model, and the $N_p$ adapter tokens $P_l^t$. Writing $\mathrm{Attn}(A, B)$ for attention with queries from $A$ and keys/values from $B$, the layer computes
\begin{equation}
\label{eq:gated_prefix}
\begin{split}
X_l' &= X_l + \mathrm{Attn}(X_l, X_l) \\
     &\quad + \tanh(g_l) \odot \mathrm{Attn}(X_l, P_l^t), \\
X_{l+1} &= X_l' + \mathrm{FFN}(X_l').
\end{split}
\end{equation}
The two attention terms are normalised by separate softmaxes, so the first term is exactly the frozen model's own self-attention and is never altered by the prefix. The second term injects the adapter tokens, scaled by a learnable per-head gate $g_l \in \mathbb{R}^{H}$ passed through $\tanh$. Following LLaMA-Adapter, $g_l$ is initialised to zero: training starts from the unmodified backbone and the influence of the prompts grows only as the gates open. Setting $g_l = 0$ for all $l$ therefore removes the generated prompts entirely, which we use as an ablation in Appendix~\ref{app:code_swap}.

\section{Theoretical Results}
\label{sec:appendix_theoretical_results}

\subsection{Proof of Theorem~\ref{thm:la_reg_sam}}

Before presenting the proof, we briefly recall the intuition behind \emph{sharpness-aware minimisation} (SAM)~\cite{foret2021sharpnessaware}.

\paragraph{SAM and flat minima.}
Given a loss $\mathcal{L}(\theta)$, SAM minimises the robust objective
\begin{equation}
\min_{\theta}\;\max_{\|\epsilon\|\le \rho}\;\mathcal{L}(\theta+\epsilon),
\label{eq:sam_obj}
\end{equation}
seeking parameters whose loss stays small under worst-case perturbations within a ball of radius $\rho$. A first-order analysis gives the inner maximiser $\epsilon^\star = \rho\,\nabla_\theta \mathcal{L}(\theta) / \|\nabla_\theta \mathcal{L}(\theta)\|$, so SAM penalises high-curvature directions and favours flat minima, which generalise better under distribution shift. This robustness perspective motivates LA-Reg, which can be read as enforcing prediction consistency on past task codes under a look-ahead parameter perturbation.

\paragraph{Proof setup.}
Fix $\tau<t$ and write
\begin{equation}
H_\phi^\tau \coloneqq H_\phi(z^\tau),\quad J_\phi^\tau \coloneqq J_\phi(z^\tau),
\end{equation}
\begin{equation}
a_\tau \coloneqq H_\phi^\tau - H_{\phi^\star}^\tau,\quad b_\tau \coloneqq J_\phi^\tau\,\Delta\phi.
\end{equation}

\begin{proof}
By assumption~(A1), $H_\phi^\tau$ admits the second-order expansion
\begin{equation}
H_{\phi+\Delta\phi}^\tau = H_\phi^\tau + J_\phi^\tau \Delta\phi + r_\tau,
\end{equation}
with $\|r_\tau\|_2 \le C_H \|\Delta\phi\|_2^2$. Hence
\begin{equation}
H_{\phi+\Delta\phi}^\tau - H_{\phi^\star}^\tau = a_\tau + b_\tau + r_\tau,
\end{equation}
and substituting into Eq.~\eqref{eq:la_reg_h} gives
\begin{equation}
\label{eq:LA_split_sum}
\mathcal{L}_{\mathrm{LA\text{-}Reg}}^{t}(\phi) = \sum_{\tau=1}^{t-1} \|a_\tau + b_\tau + r_\tau\|_2^2.
\end{equation}

\paragraph{Leading term.}
Since $\Delta\phi = -\eta\,g(\phi)$, we have $b_\tau = -\eta\,J_\phi^\tau g(\phi)$, so
\begin{equation}
\label{eq:leading_term}
\sum_{\tau=1}^{t-1} \|b_\tau\|_2^2 = \eta^2 \sum_{\tau=1}^{t-1} \|J_\phi(z^\tau)\,g(\phi)\|_2^2.
\end{equation}

\paragraph{Residual bound.}
Expanding the squared norm of $a_\tau + b_\tau + r_\tau$ and subtracting $\|b_\tau\|_2^2$, Cauchy--Schwarz yields
\begin{equation}
\label{eq:diff_bound_triangle}
\begin{split}
&\Bigl| \mathcal{L}_{\mathrm{LA\text{-}Reg}}^{t}(\phi) - \sum_{\tau<t} \|b_\tau\|_2^2 \Bigr| \\
&\;\le\; \sum_{\tau<t} \Bigl( \|a_\tau\|_2^2 + \|r_\tau\|_2^2 \\
&\qquad\qquad + 2\|a_\tau\|_2\|b_\tau\|_2 \\
&\qquad\qquad + 2\|a_\tau\|_2\|r_\tau\|_2 \\
&\qquad\qquad + 2\|b_\tau\|_2\|r_\tau\|_2 \Bigr).
\end{split}
\end{equation}
Assumptions (A1)--(A3) give the per-term bounds
\begin{equation}
\|a_\tau\|_2 \le \varepsilon,\quad \|b_\tau\|_2 \le L_J\,\eta G,\quad \|r_\tau\|_2 \le C_H \eta^2 G^2.
\end{equation}
Plugging these into Eq.~\eqref{eq:diff_bound_triangle},
\begin{equation}
\label{eq:per_tau_bound}
\begin{split}
&\Bigl| \mathcal{L}_{\mathrm{LA\text{-}Reg}}^{t}(\phi) - \sum_{\tau<t} \|b_\tau\|_2^2 \Bigr| \\
&\;\le\; (t-1) \Bigl( \varepsilon^2 + 2 L_J G\,\varepsilon\eta + 2 C_H G^2\,\varepsilon\eta^2 \\
&\qquad\qquad + 2 C_H L_J G^3\,\eta^3 + C_H^2 G^4\,\eta^4 \Bigr).
\end{split}
\end{equation}

\paragraph{Final simplification.}
For sufficiently small $\eta$ (say $\eta \le 1$), we have $\eta^2 \le \eta$ and $\eta^4 \le \eta^3$. Hence Eq.~\eqref{eq:per_tau_bound} simplifies to
\begin{equation}
\label{eq:final_bound_simplified_1}
\Bigl| \mathcal{L}_{\mathrm{LA\text{-}Reg}}^{t}(\phi) - \sum_{\tau<t} \|b_\tau\|_2^2 \Bigr| \le C(\varepsilon^2 + \varepsilon\eta + \eta^3),
\end{equation}
with
\begin{equation}
\begin{split}
C \coloneqq (t-1) \max\bigl\{ &1, \\
&\, 2 L_J G + 2 C_H G^2, \\
&\, 2 C_H L_J G^3 + C_H^2 G^4 \bigr\}.
\end{split}
\end{equation}
which is independent of $\eta$. Substituting Eq.~\eqref{eq:leading_term} into Eq.~\eqref{eq:final_bound_simplified_1} yields Eq.~\eqref{eq:la_sam_expansion}.
\end{proof}

\subsection{Adaptation on a Shared Prompt Manifold}
\label{sec:appendix_manifold}

We formalise the claim in Sec.~\ref{sec:theory} that the token generator $H_\phi$ confines local task-specific adaptation to a low-dimensional subspace of the full prompt space.

\paragraph{Setup.}
For clarity, we view the output of the hypernetwork as a vector in $\mathbb{R}^{d_P}$, where $d_P \coloneqq L_p N_p C$, and similarly flatten task codes to $\mathbb{R}^{d_z}$ with $d_z \coloneqq N_z C_z$. The hypernetwork is then a map
\begin{equation}
H_\phi: \mathbb{R}^{d_z} \rightarrow \mathbb{R}^{d_P}.
\end{equation}
Let $J_z H_\phi(z) \in \mathbb{R}^{d_P \times d_z}$ denote its Jacobian with respect to the task code, and define the reachable prompt set
\begin{equation}
\mathcal{M}_\phi \coloneqq \{ H_\phi(z) : z \in \mathbb{R}^{d_z} \} \subseteq \mathbb{R}^{d_P}.
\end{equation}

\begin{proposition}[Low-dimensional adaptation subspace]
\label{prop:manifold}
Assume $H_\phi$ is continuously differentiable. For any task code $z^t$ and perturbation $\delta z^t$, the induced prompt change $\delta P^t \coloneqq H_\phi(z^t + \delta z^t) - H_\phi(z^t)$ satisfies
\begin{equation}
\label{eq:prop_manifold}
\delta P^t = J_z H_\phi(z^t)\,\delta z^t + R(\delta z^t),
\end{equation}
with $\|R(\delta z^t)\|_2 = O(\|\delta z^t\|_2^2)$. The first-order term lies in $\mathrm{Range}(J_z H_\phi(z^t))$, a subspace of dimension at most $d_z = N_z C_z$, strictly smaller than $d_P = L_p N_p C$ whenever $d_z < d_P$.
\end{proposition}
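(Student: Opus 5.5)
The argument is a first-order Taylor expansion followed by a rank count; the only real obstacle is the remainder bound.

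\textbf{Step 1: Taylor expansion.} Apply Taylor's theorem with integral remainder to $H_\phi$ along the segment from $z^t$ to $z^t+\delta z^t$:
\begin{equation*}
H_\phi(z^t+\delta z^t)-H_\phi(z^t)=\int_0^1 J_z H_\phi(z^t+s\,\delta z^t)\,\delta z^t\,ds .
\end{equation*}
Define
\begin{equation*}
R(\delta z^t)\coloneqq\int_0^1\bigl(J_z H_\phi(z^t+s\,\delta z^t)-J_z H_\phi(z^t)\bigr)\delta z^t\,ds .
\end{equation*}
This gives the decomposition in Eq.~\eqref{eq:prop_manifold} exactly, with no approximation yet.

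\textbf{Step 2: the remainder bound (main obstacle).} Continuous differentiability alone only gives $\|R(\delta z^t)\|_2=o(\|\delta z^t\|_2)$, not the quadratic rate claimed. I will therefore strengthen the hypothesis in one of two equivalent ways, in line with assumption (A1) of Theorem~\ref{thm:la_reg_sam}:
\begin{itemize}
\item assume $H_\phi$ is twice differentiable with $\|\nabla_z^2 H_\phi\|\le C_H$ near $z^t$, or
\item assume $J_z H_\phi$ is locally $L$-Lipschitz near $z^t$.
\end{itemize}
For the architecture at hand this is reasonable: it is a composition of linear maps, LayerNorm, softmax attention and a smooth MLP activation, all smooth on bounded sets. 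Under the Lipschitz assumption, the integrand in $R$ is at most $L\,s\,\|\delta z^t\|_2^2$. Integrating over $s\in[0,1]$ gives $\|R(\delta z^t)\|_2\le \tfrac{L}{2}\|\delta z^t\|_2^2$, which is the $O(\|\delta z^t\|_2^2)$ bound.

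\textbf{Step 3: the range claim.} This is pure linear algebra. $J_zH_\phi(z^t)\,\delta z^t$ lies in $\mathrm{Range}(J_zH_\phi(z^t))$ by definition. $J_zH_\phi(z^t)$ is a $d_P\times d_z$ matrix, so its column space has dimension equal to its rank. That rank is at most $\min(d_z,d_P)=d_z$, which is strictly less than $d_P$ whenever $d_z<d_P$.

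\textbf{Step 4: interpretation (optional).} Because $\mathrm{Range}(J_zH_\phi(z^t))$ is a proper subspace of $\mathbb{R}^{d_P}$, it has Lebesgue measure zero there. Hence, to first order, code-driven adaptation cannot reach generic prompt directions. The same argument applies to the tangent space of $\mathcal{M}_\phi$ at $H_\phi(z^t)$.
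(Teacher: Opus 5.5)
This is the paper's proof: a Taylor expansion at $z^t$, followed by $\mathrm{rank}\,J_zH_\phi(z^t)\le d_z$ because the Jacobian has only $d_z$ columns. Your strengthened hypothesis is a necessary repair rather than a detour, because the paper obtains $\|R(\delta z^t)\|_2=O(\|\delta z^t\|_2^2)$ from continuous differentiability alone, which only gives $o(\|\delta z^t\|_2)$ (the $C^1$ map $x\mapsto|x|^{3/2}$ has first-order remainder $|x|^{3/2}\neq O(x^2)$ at $0$); a locally Lipschitz Jacobian or a bounded Hessian, as in (A1) of Theorem~\ref{thm:la_reg_sam}, is really required, and although these two conditions are not equivalent (a bounded Hessian implies a Lipschitz Jacobian, not conversely), either one yields your bound $\tfrac{L}{2}\|\delta z^t\|_2^2$.
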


\begin{proof}
By continuous differentiability of $H_\phi$ in $z$, Taylor's theorem at $z^t$ yields Eq.~\eqref{eq:prop_manifold} with the stated remainder bound. The first-order term $J_z H_\phi(z^t)\,\delta z^t$ lies in $\mathrm{Range}(J_z H_\phi(z^t))$ by definition, and the dimension of this range equals the rank of the Jacobian, which is bounded by the dimension of its domain:
\begin{equation}
\begin{split}
\dim \mathrm{Range}\bigl(&J_z H_\phi(z^t)\bigr) \\
&= \mathrm{rank}\bigl(J_z H_\phi(z^t)\bigr) \\
&\le \min(d_P, d_z) = d_z,
\end{split}
\end{equation}
where the last equality uses $d_z \le d_P$ in our configuration.
\end{proof}

\paragraph{Interpretation.}
Proposition~\ref{prop:manifold} formalises why $H_\phi$ acts as an implicit structural regulariser. Although the ambient prompt space has $d_P$ degrees of freedom, task-specific updates through the compact code $z^t$ can only move, to first order, within a subspace of dimension at most $d_z \ll d_P$. \model~therefore biases task prompts toward a shared low-dimensional manifold while still permitting fine-grained task-specific control through the task code, discouraging task-specific overfitting and reducing the cross-task interference inherent in unconstrained prompt tuning. In our default LLaMA-2-7B configuration, $d_z \approx 7.7 \!\times\! 10^3$ and $d_P \approx 1.3 \!\times\! 10^6$, corresponding to roughly a $170\times$ compression of the local adaptation space (the per-task \emph{storage} is smaller still, ${\approx}200\times$ below the prompt tensor, since $z_{\mathrm{inv}}$ is shared across tasks; Sec.~\ref{sec:Method}).

\subsection{Empirical Verification in a Controlled Setting}
\label{app:toy}
Theorem~\ref{thm:la_reg_sam} predicts that LA-Reg steers $H_\phi$ toward regions where past-task outputs are insensitive to gradient-aligned perturbations, i.e., flatter cross-task solutions. To test this prediction directly, isolated from the LLM and the VideoQA objective, we construct a controlled five-task continual-learning problem on the generator alone.

\paragraph{Setup.}
Each task $\tau$ is a fixed pair $(z^\tau, y^\tau)$ with $z^\tau \in \mathbb{R}^{16}$ and $y^\tau \in \mathbb{R}^{32}$ drawn from a standard Gaussian, learned sequentially with an MSE objective. $H_\phi$ is a three-layer MLP ($16 \!\to\! 64 \!\to\! 64 \!\to\! 32$, $\tanh$ activations, ${\sim}7$K parameters). Using matched initialisation we compare (i) the task loss alone and (ii) the task loss plus LA-Reg, $\sum_{\tau<t}\|H_\phi(z^\tau) - H_{\phi^\star}(z^\tau)\|_2^2$, evaluated at a two-step look-ahead point exactly as in Eq.~\ref{eq:la_reg_h}. We report two quantities. $\Delta\mathcal{L}@\rho$ is the increase in total-task loss after perturbing $\phi$ by $\rho\!=\!0.05$ along the gradient direction; lower values indicate a flatter solution. \emph{Forgetting} is the mean loss on all earlier tasks at the end of the sequence; since each task is trained to convergence before the next begins, this equals the loss increase caused by subsequent training.

\begin{table}[h]
\centering\small
\begin{tabular}{lcc}
\toprule
Method & $\Delta\mathcal{L}@\rho$ ($\downarrow$) & Forgetting ($\downarrow$) \\
\midrule
w/o LA-Reg & $1.44 \pm 0.31$ & $26.0 \pm 7.2$ \\
w/ LA-Reg & $0.02 \pm 0.003$ & $\approx 0$ \\
\bottomrule
\end{tabular}
\caption{\textbf{Controlled verification of Theorem~\ref{thm:la_reg_sam}} (mean $\pm$ std over 10 seeds). LA-Reg reduces the sharpness measure by over $60\times$ and nearly eliminates forgetting.}
\label{tab:toy_flatness}
\end{table}

\paragraph{Result.}
Table~\ref{tab:toy_flatness} shows that LA-Reg reduces the sharpness measure by over $60\times$ and drives forgetting to nearly zero, with both quantities decreasing consistently across all 10 seeds. This directly supports the link between flatter cross-task solutions and improved retention, and mirrors the trend on NExT-QA, where forgetting falls from $8.83$ to $4.76$ and decreases further with look-ahead depth (Table~\ref{tab:ablation_look_ahead}). The script for this experiment is included in our code release.

\section{Additional Experimental Details}
\label{sec:additional_exp_details}

\subsection{Architecture}
\label{app:details}

\paragraph{$H_\phi$ implementation.}
We use the same generator across all three datasets (NExT-QA, DramaQA, Visual7W). Its computation proceeds in four steps.
\emph{(i) Code projection.} The task code $z^t \in \mathbb{R}^{N_z \times C_z}$, with $N_z = 30$ slots ($5$ invariant, $25$ task-specific) of width $C_z = 256$, is projected to the generator's internal width $d = 384$ and given a positional embedding, yielding $X^t$.
\emph{(ii) Layer conditioning.} A table of $L_p = 32$ learnable layer embeddings $\{e_\ell\}$, one per adapter layer, predicts a FiLM scale and shift, $X \odot (1{+}\tanh\gamma_\ell) + \beta_\ell$, which is applied to $X^t$ and to the query tokens below; this is how a single generator produces different prompts for different layers.
\emph{(iii) Query initialisation.} The $N_p$ initial query tokens are derived from the code: an attention-pooling head summarises $X^t$ into $N_p$ vectors, each passes through a rank-$r{=}32$ bottleneck onto a shared basis, and a learned position bias is added; a first cross-attention from these queries to $X^t$ gives $U^{(0)}$.
\emph{(iv) Refinement and output.} $U^{(0)}$ is refined by $2$ pre-norm blocks of cross-attention (to $X^t$), self-attention, and MLP, each with $8$ heads and dropout $0.2$, and a shared linear head $W_o$ maps the result from $d = 384$ to the LLaMA-2-7B hidden dimension $C = 4096$, producing the $N_p$ prompt tokens for layer $\ell$.

\subsection{Hyper-parameters and Continual Fine-tuning Details}
\label{sec:hyperparams}

\paragraph{Routing.}
The task-key dimension is set to $D_k = 256$. For efficiency, we maintain an exponential moving average of the task key updated batch-wise with momentum $0.95$, which closely approximates the dataset-level mean (Eq.~\ref{eq:task_key}) at a fraction of the cost. The EMA introduces neither learnable parameters nor gradient updates; computing the exact key requires an additional full pass over the training set, whereas the same batch embeddings are already available during training. Table~\ref{tab:ema_vs_exact} confirms the approximation is essentially lossless, as expected since the embeddings come from frozen modules and do not evolve during training.

\begin{table}[h]
\centering\small
\begin{tabular}{lccc}
\toprule
Key computation & Routing Acc & \ACC{} & \FOG{} \\
\midrule
Exact mean (Eq.~\ref{eq:task_key}) & 69.76\% & 64.14 & 4.69 \\
EMA (momentum $0.95$) & 69.00\% & 64.11 & 4.76 \\
\bottomrule
\end{tabular}
\caption{\textbf{EMA vs.\ exact task keys} on NExT-QA. The online EMA matches the exact dataset-level mean within $0.03$ \ACC{}.}
\label{tab:ema_vs_exact}
\end{table}

\paragraph{Training schedule.}
We fine-tune on NVIDIA H200 GPUs with a two-epoch warm-up. Epoch budgets are matched across methods: five epochs for all methods on NExT-QA, and six epochs for all methods in the continual-dataset setting (NExT-QA$\rightarrow$DramaQA), which inherits the DramaQA configuration end-to-end. The only exception is the single-dataset DramaQA setting, where \model{} uses six epochs while the baselines are reported at their published five-epoch results~\cite{tan2025bisecle, cai2024empowering}; to verify this does not tilt the comparison, we re-ran the baselines for six epochs and observed no material change in accuracy, as they had already converged by epoch five. The NExT-QA stage of the Visual7W$\rightarrow$NExT-QA transfer reuses the standard NExT-QA configuration.

\paragraph{Pretraining on Visual7W.}
We pretrain both Bisecle and \model~for ten epochs with a four-epoch warm-up, using AdamW with batch size $16$ and gradient accumulation over $64$ steps; all remaining hyperparameters follow the protocol above. We select the checkpoint with the best validation accuracy as the starting point for continual adaptation on NExT-QA.

\paragraph{Summary of hyperparameters.}
Table~\ref{tab:hparams} lists every setting used. The \model{} architecture, loss weights, and look-ahead configuration are identical across datasets; the backbone-side training recipe (learning rate, weight decay, adapter length, sequence length, and epochs) follows the published per-dataset recipe of Bisecle~\cite{tan2025bisecle} so that all methods share the same schedule. The \emph{base learning rate} applies to the backbone-side trainable modules of LLaMA-Adapter, namely the visual projector (which maps CLIP features into the LLaMA hidden space) and the adapter gates; the \model{} modules use their own smaller rates.

\begin{table}[h]
\centering\small
\setlength{\tabcolsep}{4pt}
\resizebox{\columnwidth}{!}{%
\begin{tabular}{@{}lcc@{}}
\toprule
& \textbf{NExT-QA} & \textbf{DramaQA} \\
\midrule
\multicolumn{3}{@{}l}{\emph{Shared across datasets}} \\
Backbone / visual encoder & \multicolumn{2}{c}{LLaMA-2-7B / CLIP ViT-L/14} \\
Adapter layers $L_p$ & \multicolumn{2}{c}{32} \\
Task code $N_z$ ($N_{\mathrm{inv}}{+}N_{\mathrm{sp}}$) / $C_z$ & \multicolumn{2}{c}{30 (5{+}25) / 256} \\
Generator $H_\phi$ & \multicolumn{2}{c}{2 blocks, width 384, 8 heads, dropout 0.2} \\
Key dim.\ $D_k$ / EMA momentum & \multicolumn{2}{c}{256 / 0.95} \\
$\lambda_{\mathrm{LA\text{-}Reg}}$ / $\lambda_{\mathrm{inv}}$ / $\lambda_{\mathrm{VAQ}}$ / $\lambda_{\mathrm{QAV}}$ & \multicolumn{2}{c}{0.5 / 0.1 / 1.0 / 0.1} \\
Look-ahead steps / inner step size & \multicolumn{2}{c}{2 / $0.5\times$ base LR} \\
Optimiser / $(\beta_1,\beta_2)$ & \multicolumn{2}{c}{AdamW / (0.9, 0.95)} \\
LR: generator $H_\phi$ / $z_{\mathrm{sp}}$ & \multicolumn{2}{c}{$10^{-3}$ / $10^{-3}$} \\
Effective batch size (4 per GPU) & \multicolumn{2}{c}{128} \\
\midrule
\multicolumn{3}{@{}l}{\emph{Per-dataset (following Bisecle)}} \\
Epochs & 5 & 6 \\
Adapter tokens per layer $N_p$ & 10 & 15 \\
Max text length (tokens) & 128 & 280 \\
Base LR (projector, gates) & $5\!\times\!10^{-3}$ & $7.5\!\times\!10^{-3}$ \\
LR: $z_{\mathrm{inv}}$ & $10^{-3}$ & $5\!\times\!10^{-4}$ \\
Weight decay & 0.14 & 0.10 \\
\bottomrule
\end{tabular}}
\caption{\textbf{Complete hyperparameter settings} for \model{}. The continual-dataset stream inherits the DramaQA column; the Visual7W$\rightarrow$NExT-QA stage reuses the NExT-QA column.}
\label{tab:hparams}
\end{table}

\section{Extended Related Work}
\label{sec:more_related_work}
\paragraph{Prompt-based continual learning.}
L2P~\cite{wang2022learning} pioneers prompt retrieval from a fixed pool 
via prompt tuning~\cite{lester2021power} but suffers from interference 
due to limited prompt capacity. DualPrompt~\cite{wang2022dualprompt} 
separates task-invariant (G-Prompt) from task-specific (E-Prompt) 
components, and CODA-Prompt~\cite{smith2023coda} refines this with 
attention-based prompting. LAE~\cite{gao2023unified} couples an online 
PEA module with an offline one via an adaptation-speed calibrator, 
while HiDe-Prompt~\cite{wang2023hierarchical} adopts hierarchical 
prompts with inference-time ensembling. A different line stores per-task 
prompts to avoid interference: ProgPrompt~\cite{razdaibiedina2023progressive} 
concatenates frozen task prompts in the spirit of progressive 
networks~\cite{rusu2016progressive}, and ModalPrompt~\cite{zeng2025modalprompt} 
extends this to multimodal continual instruction tuning by using 
CLIP-based dual-modality similarity for prompt fusion and top-$k$ 
selection. While effective, these methods maintain prompt banks that 
grow linearly with the task stream, and ModalPrompt additionally depends 
on auxiliary CLIP encoders for routing.

\paragraph{Hypernetworks for continual adaptation.}
Hypernetworks~\cite{ha2017hypernetworks} produce the parameters of 
another network conditioned on an auxiliary signal. In continual 
learning, prior work typically conditions on task identity to 
synthesise full or partial model 
weights~\cite{von2020continual,hemati2023partial,chandra2023continual,kumaravelu2025evocl}, 
which is computationally costly and poorly suited to LLM-scale models. 
AdaPrefix++~\cite{adhikari2025adaprefix++} mitigates this cost by 
generating only prefix parameters from trainable per-task embeddings through a shared hypernetwork.
\label{app:adaprefix}
It is conceptually the closest method to \model{}, and we detail the differences here.
\emph{Setting.} AdaPrefix++ targets image classification with a fixed vision backbone, whereas \model{} addresses generative VideoQA with temporal inputs and is evaluated under cross-dataset and cross-modality shifts.
\emph{Routing.} Both can operate without task identity but by different mechanisms: AdaPrefix++ infers the task by running a forward pass through every task's head and selecting the lowest-entropy prediction, an $O(T)$ cost that grows with the number of tasks and depends on model confidence; \model{} routes by a single cosine lookup in a frozen embedding space ($O(1)$), which does not drift as tasks accumulate.
\emph{Architecture.} Both store a compact per-task input to a shared generator (task embeddings vs.\ task codes), but AdaPrefix++ additionally attaches a task-specific adapter to the backbone for every task, whereas \model{} adds no per-task backbone modules; its only per-task state is a routing key and code.
\emph{Regularisation.} AdaPrefix++ regularises against stored past behaviour without look-ahead and provides no theoretical account of its regulariser; \model{} adds LA-Reg, characterised in Theorem~\ref{thm:la_reg_sam} as a cross-task sharpness penalty. Table~\ref{tab:ablation_look_ahead} isolates the contribution of look-ahead alone ($+1.81$ \ACC{}, $-0.64$ \FOG{} from $0$ to $2$ steps).
AdaPrefix++ has no public implementation, so we compare along these design dimensions rather than empirically. Recent hypernetwork-based LLM adaptation 
methods~\cite{charakorn2025texttolora,tan2025instant} target different 
settings (efficient fine-tuning, in-context adaptation) and lack 
continual-learning properties.

\paragraph{Image Question Answering (ImageQA).}
ImageQA is a core multimodal task requiring models to ground visual content in natural-language answers, and recent multimodal LLMs have substantially improved its performance~\cite{Anil2023Gemini, Liu2024Visual}. Yet these systems are predominantly trained offline on stationary data distributions and are not designed for continual learning from evolving multimodal streams. Whether and how they transfer to the more demanding continual VideoQA setting, which adds temporal reasoning on top of distribution shift, remains largely unexplored. We provide one of the first systematic evaluations of this transfer in our ImageQA$\rightarrow$VideoQA protocol (Sec.~\ref{sec:Experiments}).

\paragraph{Meta-Learning for Continual Learning.}
Meta-learning mitigates forgetting by optimising parameters, or update rules, for rapid adaptation with minimal interference. Existing meta-CL methods focus primarily on shaping the optimisation trajectory: MER aligns gradients across replayed experiences via Reptile-style updates~\cite{nichol2018first, riemer2019learning}, and La-MAML extends MAML with a look-ahead objective and per-parameter learning-rate modulation for online continual learning~\cite{gupta2020look}. We target a substantially harder regime, prompt-based continual adaptation of multimodal LLMs for video reasoning, and depart from prior meta-CL work in two ways. First, our look-ahead term regularises the \emph{prompt-output space} rather than the parameter or gradient space, making it directly compatible with frozen-backbone PEA. Second, Theorem~\ref{thm:la_reg_sam} gives a first-order characterisation of our look-ahead term as a cross-task sharpness penalty, connecting it to sharpness-aware minimisation.

% \subsection{Continual VideoQA}

% % \subsubsection{Result with variation}

% % - Different seeds

% \subsection{ImageQA $\rightarrow$ VideoQA}

\section{More Results and Analyses}
\label{sec:more_ablation_studies}

\subsection{Detailed Token Analysis}
\label{app:token_analysis}

\paragraph{Layer selection.}
We select visualisation layers by silhouette score across all 32 adapter layers, picking the middle-layer maximum: $i = 12$ for NExT-QA, $i = 9$ for DramaQA. Early layers process inputs; late layers commit to output generation; middle layers tend to encode the most task-discriminative representations.

\paragraph{NExT-QA (Fig.~\ref{fig:token_corr}, middle).}
Three regions emerge: \textbf{(i)} a \emph{sequential-reasoning} cluster merging TN (Temporal-Next: predicting what happens next) and CH (Causal-How: explaining how an event unfolds), both involving temporal event prediction; \textbf{(ii)} a \emph{descriptive continuum} where DL (Location), TC (Count), DO (Object), and DC (Description) arrange from spatial-temporal grounding to cardinality counting; and \textbf{(iii)} two outliers, TP (Temporal-Previous) and CW (Causal-Why), closer to static-state and abstract-causal reasoning respectively. Interestingly, this organisation does not align with the official NExT-QA taxonomy (Causal/Temporal/Descriptive), suggesting \model{} groups tasks more by reasoning \emph{skill} than by question type.

\paragraph{DramaQA (Fig.~\ref{fig:token_corr}, right).}
A similar skill-oriented pattern appears. DO (Descriptive-Object) and DL (Descriptive-Location) form a spatial-visual grouping (locating an entity arguably requires identifying it first), while CH (Causal-How), CW (Causal-Why), and TW (Temporal-When) occupy distinct strips associated with procedural, abstract-causal, and temporal reasoning. CH and CW are also relatively well separated despite sharing the ``Causal'' taxonomy label, hinting that \model{} captures some distinction between \emph{how} and \emph{why} at the representational level.

\paragraph{Cross-dataset consistency.}
This skill-oriented organisation appears across both datasets, which are trained independently with different question types and orderings. While the visualisations are exploratory, they are consistent with the hypothesis that \model{} captures transferable reasoning structure rather than purely dataset-specific patterns.

\subsection{Impact of $\lambda_{\mathrm{LA\text{-}Reg}}$}
\label{sec:ablate_la_reg_weight}

We sweep $\lambda_{\mathrm{LA\text{-}Reg}}$ over $\{0, 0.25, 0.5, 0.75, 1.0, 1.5\}$ on NExT-QA. Table~\ref{tab:ablate_la_reg_weight} shows that LA-Reg substantially improves over the no-regularisation baseline, peaks at $0.5$--$0.75$, and saturates beyond $1.0$, confirming robustness to a wide range of weights.

\begin{table}[t]
\centering
\caption{\textbf{Effect of $\lambda_{\mathrm{LA\text{-}Reg}}$} on NExT-QA. We use $\lambda_{\mathrm{LA\text{-}Reg}} = 0.5$ as default.}
\label{tab:ablate_la_reg_weight}
\small
\setlength{\tabcolsep}{8pt}
\renewcommand{\arraystretch}{1.1}
\begin{tabular}{c|cc}
\toprule
$\lambda_{\mathrm{LA\text{-}Reg}}$ & \ACC{}~$\uparrow$ & \FOG{}~$\downarrow$ \\
\midrule
$0.00$ & 54.91 & 8.39 \\
$0.25$ & 63.46 &  5.01 \\
\textbf{0.50} & \best{64.11} & \best{4.76} \\
$0.75$ & \runner{63.92} & \runner{4.89} \\
$1.00$ & 63.70 & 4.98 \\
$1.50$ & 63.59 & 5.07 \\
\bottomrule
\end{tabular}
\end{table}

\subsection{Impact of Look-ahead learning rate}

Table~\ref{tab:lookahead_lr} sweeps the look-ahead step size $\eta_{\text{LA}}$ on NExT-QA. \model{} achieves the best \ACC{} ($64.11$) and lowest \FOG{} ($4.76$) at $\eta_{\text{LA}} = 0.50$, and stays within $0.72$ \ACC{} for $\eta_{\text{LA}} \leq 0.75$, indicating robustness to fine-grained tuning. Beyond $\eta_{\text{LA}} \geq 1.0$, both metrics degrade monotonically (up to $-2.30$ \ACC{} and $+1.27$ \FOG{} at $\eta_{\text{LA}} = 1.5$): large inner steps push the probe into the current task's local minimum, so the regulariser encodes \emph{overfitted directions} rather than local flatness, matching our SAM-based motivation. We use $\eta_{\text{LA}} = 0.50$ as default.

\begin{table}[t]
\centering
\caption{\textbf{Effect of look-ahead step size $\eta_{\text{LA}}$ 
on NExT-QA.} Performance is robust for $\eta_{\text{LA}} < 1.0$ and 
degrades for larger values, which over-fit the look-ahead probe to 
the current task. Best in \textbf{bold}, second-best \underline{underlined}.}
\label{tab:lookahead_lr}
\small
\setlength{\tabcolsep}{6pt}
\begin{tabular}{c|cc}
\toprule
$\eta_{\text{LA}}$  & \ACC{}~$\uparrow$ & \FOG{}~$\downarrow$ \\
\midrule
0.10  & 63.74 & 4.92 \\
0.25  & \runner{63.96} & \runner{4.84}  \\
\textbf{0.50}  & \best{64.11} & \best{4.76} \\
0.75  & 63.39 & 5.07 \\
1.00  & 62.72 & 5.23 \\
1.50  & 61.81 & 6.03 \\
\bottomrule
\end{tabular}
\end{table}

\subsection{Impact of Video Auxiliary Loss}
\label{sec:abl_qav}

Our causal analysis (Sec.~\ref{sec:Method}) treats $\mathcal{L}_{\mathrm{QAV}}$ as an anti-causal but useful grounding signal, prescribing a small $\lambda_{\mathrm{QAV}}$. Table~\ref{tab:qav_weight} confirms the predicted \emph{inverted-U}: removing QAV ($\lambda_{\mathrm{QAV}} = 0$) drops \ACC{} to $63.75$ as visual grounding is lost, the optimum at $\lambda_{\mathrm{QAV}} = 0.10$ reaches $64.11$ \ACC{} and $4.76$ \FOG{}, and up-weighting progressively degrades to $62.21$ \ACC{} at $\lambda_{\mathrm{QAV}} = 1.50$ ($-1.90$ \ACC{}, $+0.73$ \FOG{}) as the underdetermined $p(V \mid Q, A)$ pulls the model along anti-causal paths. Even the $\lambda_{\mathrm{QAV}} = 0$ result highlights the importance of the causal $\mathcal{L}_{\mathrm{VAQ}}$, which alone lifts \ACC{} from $62.41$ (Table~\ref{tab:ablate_losses_ticks}, no auxiliary) to $63.75$. The same trend is reported in concurrent VideoQA work~\cite{tan2025bisecle}. We adopt $\lambda_{\mathrm{QAV}} = 0.10$ throughout.

\begin{table}[t]
\centering
\caption{\textbf{Effect of QAV auxiliary loss weight $\lambda_{\mathrm{QAV}}$ 
on NExT-QA.} Small weights ($\lambda_{\mathrm{QAV}} \leq 0.25$) yield best 
performance; disabling QAV ($\lambda_{\mathrm{QAV}} = 0$) loses cross-modal 
supervision, while large weights induce anti-causal drift as predicted by 
our causal analysis. Best in \textbf{bold}.}
\label{tab:qav_weight}
\small
\setlength{\tabcolsep}{6pt}
\begin{tabular}{c|cc}
\toprule
$\lambda_{\mathrm{QAV}}$  & \ACC{}~$\uparrow$ & \FOG{}~$\downarrow$ \\
\midrule
0.00  & 63.75 & 4.98 \\
\midrule
\textbf{0.10}  & \best{64.11} & \best{4.76} \\
0.25  & \runner{64.02} & \runner{4.77} \\
0.50  & 63.42 & 5.10 \\
0.75  & 63.58 & 5.03 \\
1.00  & 62.68 & 5.24 \\
1.50  & 62.21 & 5.49 \\
\bottomrule
\end{tabular}
\end{table}

\begin{figure*}[t]
\centering
\includegraphics[width=\textwidth]{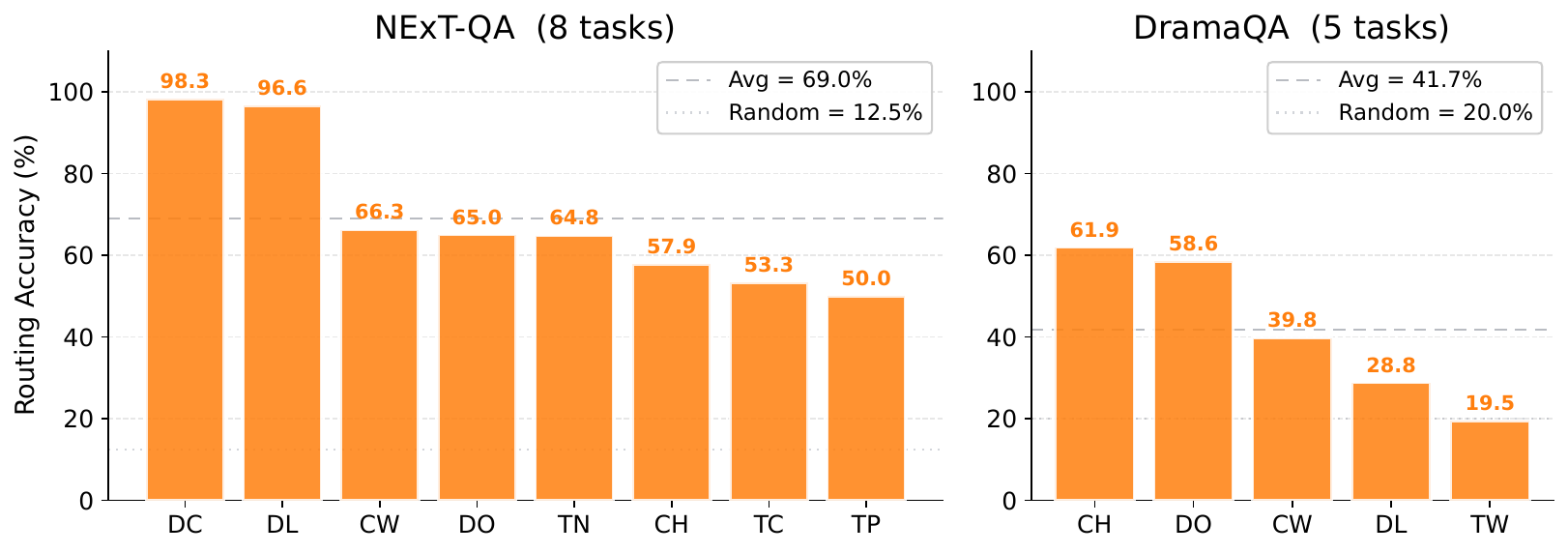}
\caption{\textbf{Per-task accuracy of our training-free routing} on NExT-QA (left, $8$ tasks) and DramaQA (right, $5$ tasks). Bars show retrieval accuracy via cosine similarity in the frozen embedding space; dashed and dotted lines mark the dataset-wise average and the random baseline.}
\label{fig:routing_acc}
\end{figure*}

\begin{table}[t]
\centering
\caption{\textbf{Training-free vs oracle routing} for \model{}. Minor gaps on both benchmarks confirm that downstream performance is largely decoupled from routing accuracy.}
\label{tab:oracle_routing}
\small
\setlength{\tabcolsep}{6pt}
\renewcommand{\arraystretch}{1.1}
\begin{tabular}{l|cc|cc}
\toprule
& \multicolumn{2}{c|}{\textbf{NExT-QA}} & \multicolumn{2}{c}{\textbf{DramaQA}} \\
Routing & \ACC{}~$\uparrow$ & \FOG{}~$\downarrow$ & \ACC{}~$\uparrow$ & \FOG{}~$\downarrow$ \\
\midrule
Training-free (ours) & $64.11$ & $4.76$ & $72.52$ & $9.76$ \\
Oracle               & $64.61$ & $4.14$ & $72.90$ & $9.01$ \\
\midrule
$\Delta$             & $-0.50$ & $+0.62$ & $-0.38$ & $+0.75$ \\
\bottomrule
\end{tabular}
\end{table}

\subsection{Routing Accuracy.}
\label{app:routing_acc}

Fig.~\ref{fig:routing_acc} reports per-task accuracy of our gradient-free routing. NExT-QA reaches $69.0\%$ on average ($5.5\times$ random); DramaQA drops to $41.7\%$ ($2\times$ random) due to heavy overlap among its five Wh-categories sharing a single TV-series visual stream. Nearest-key errors are thus systematic rather than random: they concentrate among tasks that share visual context, so a misrouted query lands on a semantically adjacent task whose code produces a similar prompt. Crucially, this gap does not translate to downstream loss: against an \emph{oracle} variant with ground-truth task identity (Table~\ref{tab:oracle_routing}), \model~loses only $0.50$ \ACC{} on NExT-QA and $0.38$ on DramaQA. The reason is that discriminability lives in \emph{token space}, not in pretrained key space: although DramaQA's Wh-categories are barely separable for frozen embeddings, the prompts generated by $H_\phi$ form well-separated clusters in the LLM hidden space (see token analysis in Sec.~\ref{sec:Experiments}). Combined with the shared $z_{\mathrm{inv}}$, this allows neighbouring task codes to produce visually appropriate prompts under imperfect routing, so end performance is largely insensitive to routing accuracy by design. Consistent with this, retrieval errors are systematic rather than random: when the router errs, it selects a semantically adjacent task rather than an arbitrary one, so a misrouted query receives a code that produces a similar prompt. The next subsection bounds the worst case directly.

\subsection{Task-Code Contribution and the Prompt Pathway}
\label{app:code_swap}
The small oracle gap in Table~\ref{tab:oracle_routing} measures the cost of imperfect routing, not the importance of the task codes. To isolate the contribution of the retrieved code, we replace it at inference with the code of the \emph{most-dissimilar} stored task (lowest key cosine similarity) while keeping the shared component $z_{\mathrm{inv}}$ fixed. Table~\ref{tab:code_swap} shows that this far more extreme substitution than routing ever makes lowers \ACC{} from $64.11$ to $62.14$, confirming that the task-specific component carries measurable task-relevant information; the drop is moderate because transferable structure is intentionally captured by the generator and $z_{\mathrm{inv}}$.

\begin{table}[h]
\centering\small
\begin{tabular}{lc}
\toprule
Code assignment (NExT-QA) & \ACC{} \\
\midrule
Oracle & 64.61 \\
Retrieved (ours) & 64.11 \\
Most-dissimilar task code & 62.14 \\
Prompt pathway disabled ($g_l\!=\!0$) & 25.24 \\
\bottomrule
\end{tabular}
\caption{\textbf{Code-swap and prompt-pathway ablation} on NExT-QA. A wrong code costs $1.97$ points; removing the generated prompt entirely costs $38.87$.}
\label{tab:code_swap}
\end{table}

The prompt pathway itself is load-bearing: zeroing the adapter gates ($g_l\!=\!0$ in Eq.~\ref{eq:gated_prefix}), which removes the generated prompt from every layer, collapses accuracy to $25.24$, a $38.87$-point drop, whereas the most-dissimilar code costs only $1.97$ points. This contrast clarifies the division of roles: the generated prompt carries the transferable structure, while the task-specific code refines it. Consistently, the learned gate modulation is selective and concentrated in a small number of mid-to-late-layer heads, with $\operatorname{mean}(\tanh g_l) \approx 0.01$ and $\max|\tanh g_l| \approx 0.25$ (the per-layer maximum grows from $0.03$ in early layers to $0.25$ in the final layer). The small oracle gap therefore reflects robustness to imperfect routing rather than a lack of task-code contribution.

\subsection{Efficiency Analysis}
\label{app:efficiency}

\begin{table*}[t]
\centering
\caption{\textbf{Computational cost comparison.} Fine-tuned parameters, training time (hours), and peak GPU memory (GB) across continual VideoQA settings. Numbers in parentheses indicate timing without auxiliary losses. For ProgPrompt, peak memory grows across the task stream (initial $\rightarrow$ final).}
\label{tab:compute_cost}
\small
\setlength{\tabcolsep}{4pt}
\renewcommand{\arraystretch}{0.8}
\begin{tabular}{lcccc}
\toprule
\textbf{Metric} & \textbf{Bisecle} & \textbf{ProgPrompt} & \textbf{\model} & \textbf{\model~(1 step)} \\
\midrule
Fine-tuned parameters (M)                          & 3.2M    & 5.1M                       & 9.6M  & 9.6M \\
\midrule
Training time on NExT-QA                           & 1.48    &  2.13                          & 5.55  & 2.46 \\
Peak GPU Mem on NExT-QA (GB)                       & 18.9    & 21.16$\rightarrow$25.43    &  103.8     & 82.0 \\
\midrule
Training time on NExT-QA$\rightarrow$DramaQA       & 5.77    & 6.42                           & 15.58 & 8.03 \\
Peak GPU Mem on NExT-QA$\rightarrow$DramaQA (GB)   & 18.9    & 22.98$\rightarrow$29.34    &   103.9    & 82.1 \\
\bottomrule
\end{tabular}
\end{table*}

Table~\ref{tab:compute_cost} reports trainable parameters, training time, and peak GPU memory. We measure efficiency under matched batch size of four on $2\times$ H200 GPUs, with $\mathcal{L}_{\mathrm{VAQ}}$ active for all methods and $\mathcal{L}_{\mathrm{QAV}}$ omitted since ProgPrompt is incompatible with it. \model~uses $9.6$M trainable parameters, roughly $3\times$ Bisecle and $2\times$ ProgPrompt, yet still under $0.15\%$ of the $7$B-parameter backbone. Training cost and peak memory are correspondingly higher, reflecting two principled design choices: (i) a transformer-based token generator that synthesises layer-specific prompts on demand, and (ii) the look-ahead inner loop in LA-Reg, which doubles forward passes per step. Crucially, reducing LA-Reg to a single look-ahead step (\model{}~$1$-step) more than halves training time ($5.55\!\to\!2.46$h on NExT-QA, $15.58\!\to\!8.03$h on the joint stream) and trims peak memory by $\sim\!20\%$, while still surpassing Bisecle on both metrics ($63.92$ vs $62.37$ \ACC{}, $5.02$ vs $5.34$ \FOG{}).

ProgPrompt's lower wall-clock also stems from a classification objective that sidesteps autoregressive next-token prediction. Its limitation, however, is scalability: memory \emph{grows with the task sequence} ($21.16\!\to\!25.43$ on NExT-QA, $22.98\!\to\!29.34$ on the joint stream) since per-task adapters and prompts must be retained, an overhead that compounds over long continual deployments. \model{}'s footprint, by contrast, remains \emph{constant} across the task stream: each new task adds only kilobytes through the compact task code, independent of stream length.

Overall, \model{}'s additional compute is a one-time investment in the generator and look-ahead optimisation, repaid on three fronts: (i) consistent SOTA accuracy and forgetting across single-dataset, multi-dataset, and ImageQA$\rightarrow$VideoQA settings; (ii) stronger zero-shot generalisation to unseen datasets at test time; and (iii) constant per-task memory growth that scales gracefully to long task streams---a property whose value only increases as deployments accumulate more tasks.

\paragraph{LA-Reg at long horizons.}
LA-Reg scales linearly with the number of stored task codes, but each term requires only a forward pass through the small generator $H_\phi$ (${\approx}6.4$M parameters), not the 7B LLM. Table~\ref{tab:lareg_overhead} reports a micro-benchmark of the per-step overhead: at $50$ stored codes it is $9.47$\,ms, i.e., $2.2\%$ of a full training step ($438.9$\,ms). The cost can also be made independent of the horizon by sampling a fixed number of stored codes per step: on the 13-task NExT-QA$\rightarrow$DramaQA stream, sampling only $4$ codes yields $61.19$ \ACC{} / $11.96$ \FOG{} versus $61.96$ / $11.72$ with all codes, a $0.77$-point reduction that remains far above Bisecle ($55.31$ \ACC{}). Regarding plasticity, since the regulariser sums over $t-1$ codes, averaging over stored codes (equivalently $\lambda_{\mathrm{LA\text{-}Reg}} \propto \tfrac{1}{t-1}$) keeps its per-task strength constant; our fixed setting nonetheless shows no degradation in new-task learning over the 13-task stream.

\begin{table}[h]
\centering\small
\begin{tabular}{lccc}
\toprule
Stored codes & 12 & 25 & 50 \\
\midrule
LA-Reg overhead per step (ms) & 2.74 & 5.05 & 9.47 \\
\bottomrule
\end{tabular}
\caption{\textbf{LA-Reg overhead vs.\ number of stored codes} (per training step; full step $= 438.9$\,ms).}
\label{tab:lareg_overhead}
\end{table}

\paragraph{Joint-training upper bound.}
To place the gains in context, Table~\ref{tab:joint_bound} reports joint (non-continual) training on all eight NExT-QA tasks under the same backbone and training budget. \model{} reaches $93.7\%$ of the joint-training accuracy versus $91.2\%$ for Bisecle; equivalently, it closes about $29\%$ of the gap between Bisecle and the non-continual ceiling.

\begin{table}[h]
\centering\small
\begin{tabular}{lc}
\toprule
Method (NExT-QA) & \ACC{} \\
\midrule
Joint training (upper bound) & 68.42 \\
\model{} & 64.11 \\
Bisecle & 62.37 \\
\bottomrule
\end{tabular}
\caption{\textbf{Joint-training upper bound} on NExT-QA.}
\label{tab:joint_bound}
\end{table}

\subsection{Sensitivity to the Number of Prompt-Injection Layers}

\begin{table}[h]
\centering
\caption{Performance with different numbers of prompt layers.}
\label{tab:layer_ablation}
\small
\setlength{\tabcolsep}{4pt}
\renewcommand{\arraystretch}{0.95}
\begin{tabular}{c|cc}
\toprule
\# P. Layer & \HACC & \HFOG \\
\midrule
8  & 54.32 & 9.02 \\
\rowcolor{black!3} 16 & 58.30 & 7.13 \\
24 & \runner{59.04} & \runner{6.87} \\
\rowcolor{black!3} 32 & \best{64.11} & \best{4.76} \\
\bottomrule
\end{tabular}
\end{table}

We study how \model~ varies with the number of adapted adapter layers in Table~\ref{tab:layer_ablation}. Overall, increasing the adapted depth consistently improves performance, supporting the effectiveness of parameter-efficient adaptation for continual VideoQA.

\subsection{Task Order}

We evaluate \model~under different task orders on NExT-QA and compare against Bisecle in Table~\ref{tab:task_order_performance}. \model~is more robust, outperforming Bisecle on three orders while remaining competitive on the fourth, supporting the effectiveness of multimodal token hyper-generation.

\begin{table*}[!ht]
    \centering
    \caption{Robustness to task order on NExT-QA: Bisecle versus \model~, reporting average accuracy and forgetting across four task permutations.}
    \label{tab:task_order_performance}
    \resizebox{0.9\textwidth}{!}{
        \begin{tabular}{lcccc}
            \toprule
            \multirow{2}{*}{Task Order} & \multicolumn{2}{c}{\HACC} & \multicolumn{2}{c}{\HFOG} \\
            \cmidrule(lr){2-3} \cmidrule(lr){4-5}
             & Bisecle & \model~ & Bisecle & \model~ \\
            \midrule
            $<$TP, TN, CH, TC, DL, DO, CW, DC$>$ & 57.98 & \best{61.23} & 7.16 & \best{6.82} \\
            \rowcolor{black!3}$<$DO, CW, DC, CH, TP, TC, TN, DL$>$ & 57.95 & \best{61.35} & 8.93 & \best{6.91} \\
            $<$CW, DO, TN, DL, TC, TP, DC, CH$>$ & 62.25 & \best{62.79} & 5.70 & \best{4.96} \\
            \rowcolor{black!3}$<$CH, DL, TP, TC, DC, DO, TN, CW$>$ & \best{63.09} & 62.96 & 2.87 & \best{2.79} \\
            \bottomrule
        \end{tabular}
    }
\end{table*}

\section{Qualitative Results}
\label{sec:more_qualitative_results}

\subsection{Continual VideoQA}

Fig.~\ref{fig:qual_vqa_12} and Fig.~\ref{fig:qual_vqa_34} present qualitative comparisons in continual VideoQA. \model~consistently yields the correct answer, whereas Bisecle fails in these cases.

\begin{figure*}[t]
  \centering
  \begin{subfigure}[t]{0.49\textwidth}
    \centering
    \includegraphics[width=\linewidth]{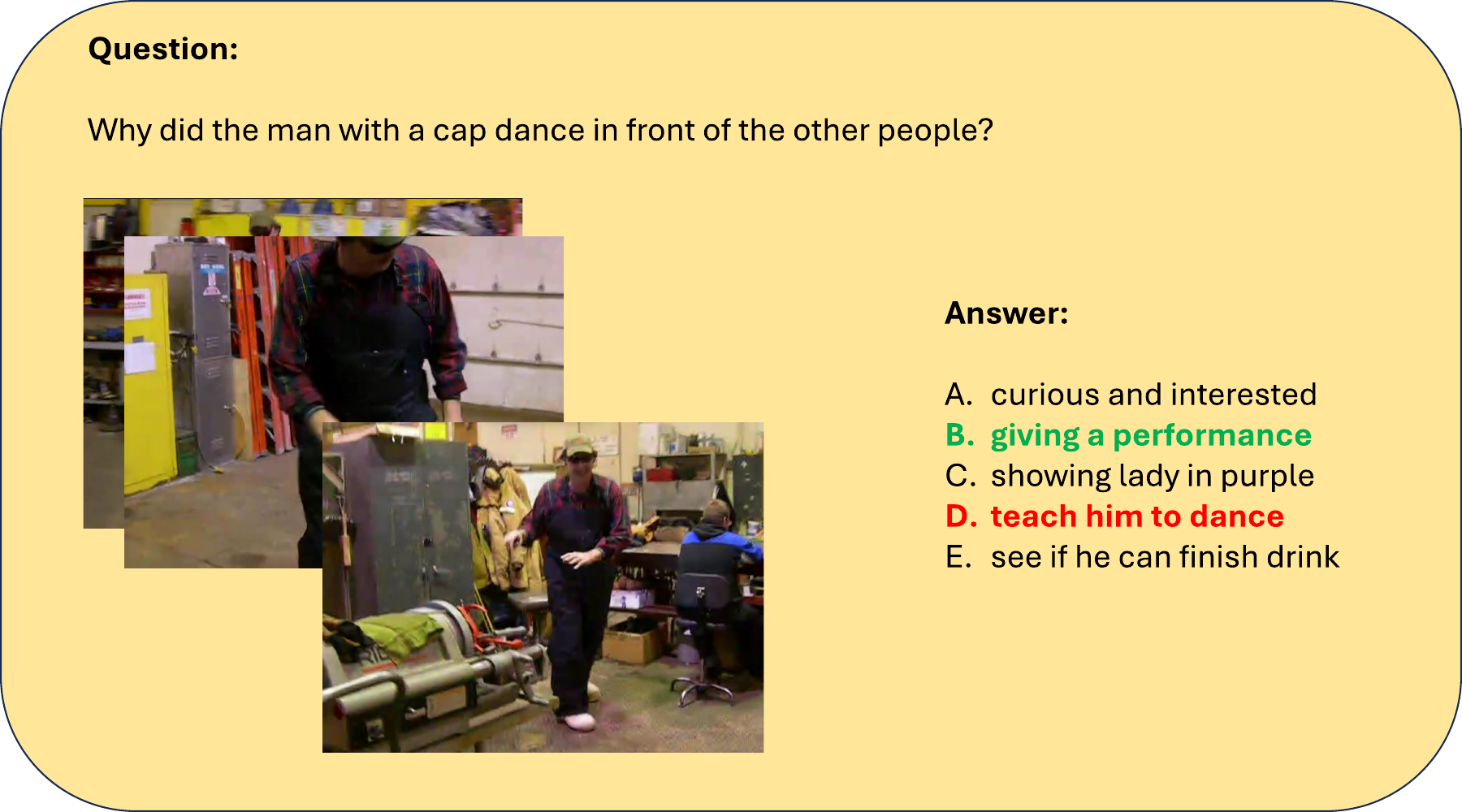}
  \end{subfigure}\hfill
  \begin{subfigure}[t]{0.49\textwidth}
    \centering
    \includegraphics[width=\linewidth]{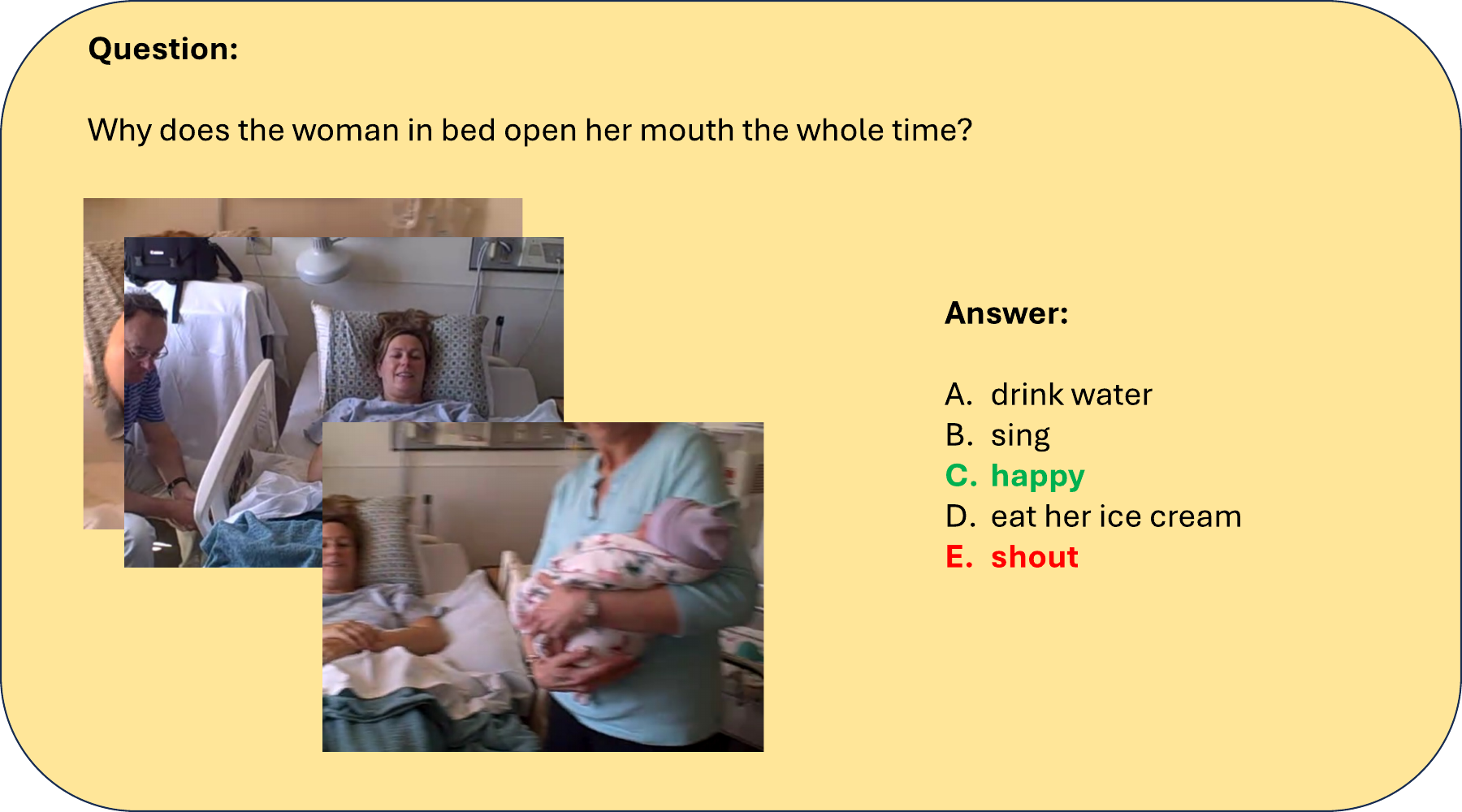}
  \end{subfigure}
  \caption{\textbf{Qualitative Examples 1--2.} \model~ predicts the correct answer (\textcolor{green!55!black}{green}), whereas Bisecle produces an incorrect one (\textcolor{red!70!black}{red}) in continual VideoQA.}
  \label{fig:qual_vqa_12}
\end{figure*}

\begin{figure*}[t]
  \centering
  \begin{subfigure}[t]{0.49\textwidth}
    \centering
    \includegraphics[width=\linewidth]{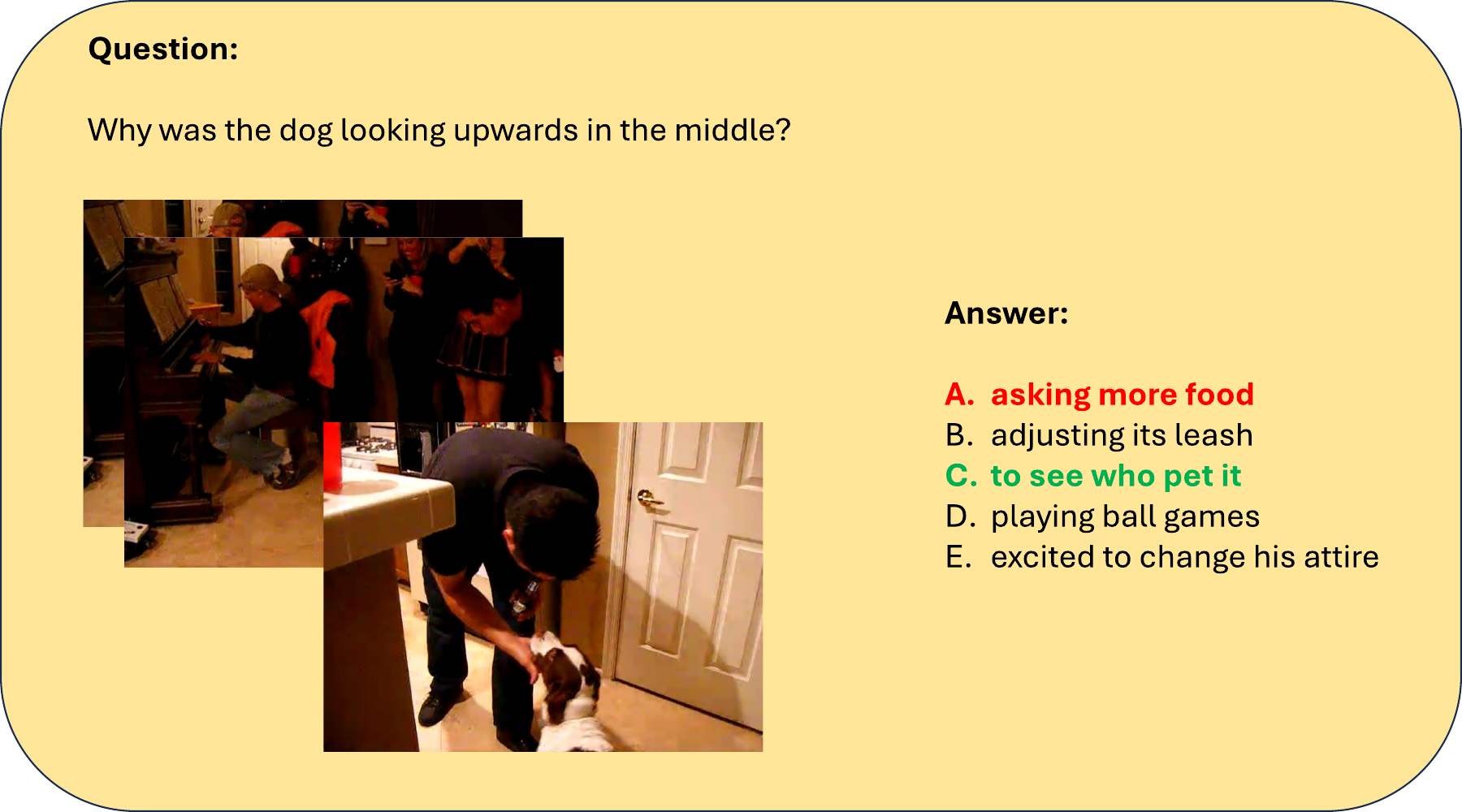}
  \end{subfigure}\hfill
  \begin{subfigure}[t]{0.49\textwidth}
    \centering
    \includegraphics[width=\linewidth]{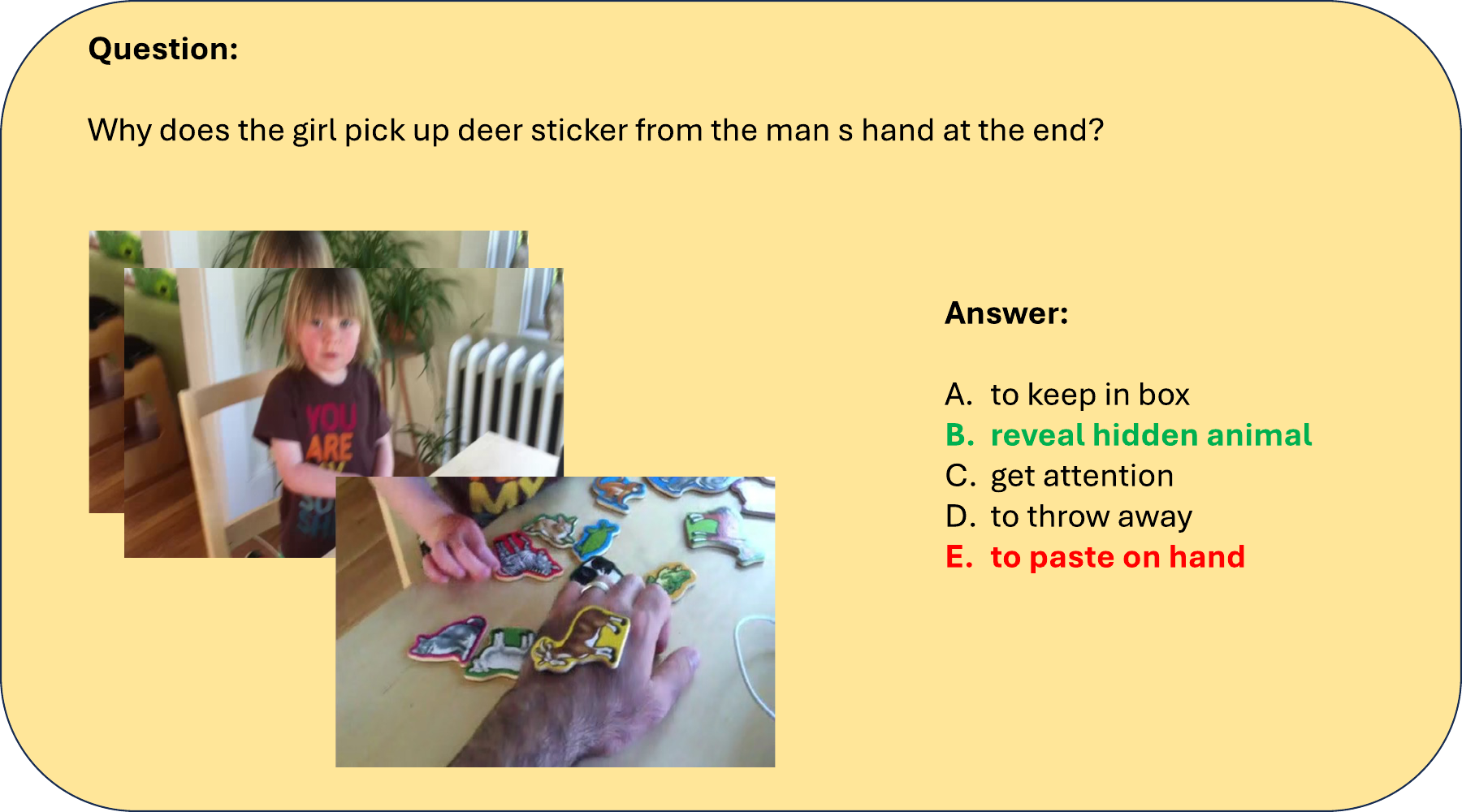}
  \end{subfigure}
  \caption{\textbf{Qualitative Examples 3--4.} \model~ predicts the correct answer (\textcolor{green!55!black}{green}), whereas Bisecle produces an incorrect one (\textcolor{red!70!black}{red}) in continual VideoQA.}
  \label{fig:qual_vqa_34}
\end{figure*}

\subsection{ImageQA$\rightarrow$VideoQA transfer}

Fig.~\ref{fig:qual_vqa_56} show qualitative results for continual ImageQA$\rightarrow$VideoQA transfer. After adapting to video tasks, \model~ still predicts the correct answer, whereas Bisecle fails on these Visual7W examples.

\begin{figure*}[t]
  \centering
  \begin{subfigure}[t]{0.49\textwidth}
    \centering
    \includegraphics[width=\linewidth]{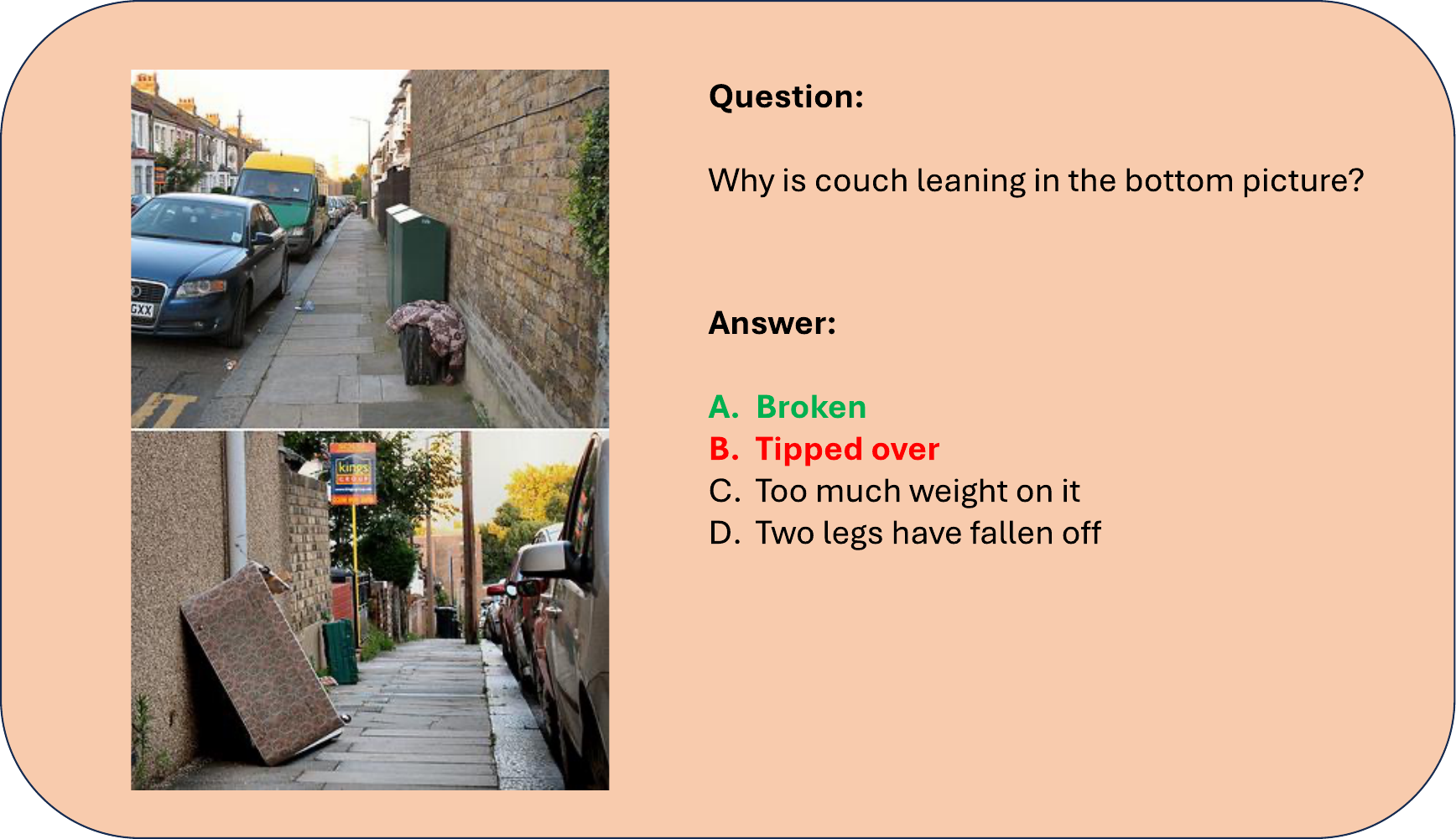}
  \end{subfigure}\hfill
  \begin{subfigure}[t]{0.49\textwidth}
    \centering
    \includegraphics[width=\linewidth]{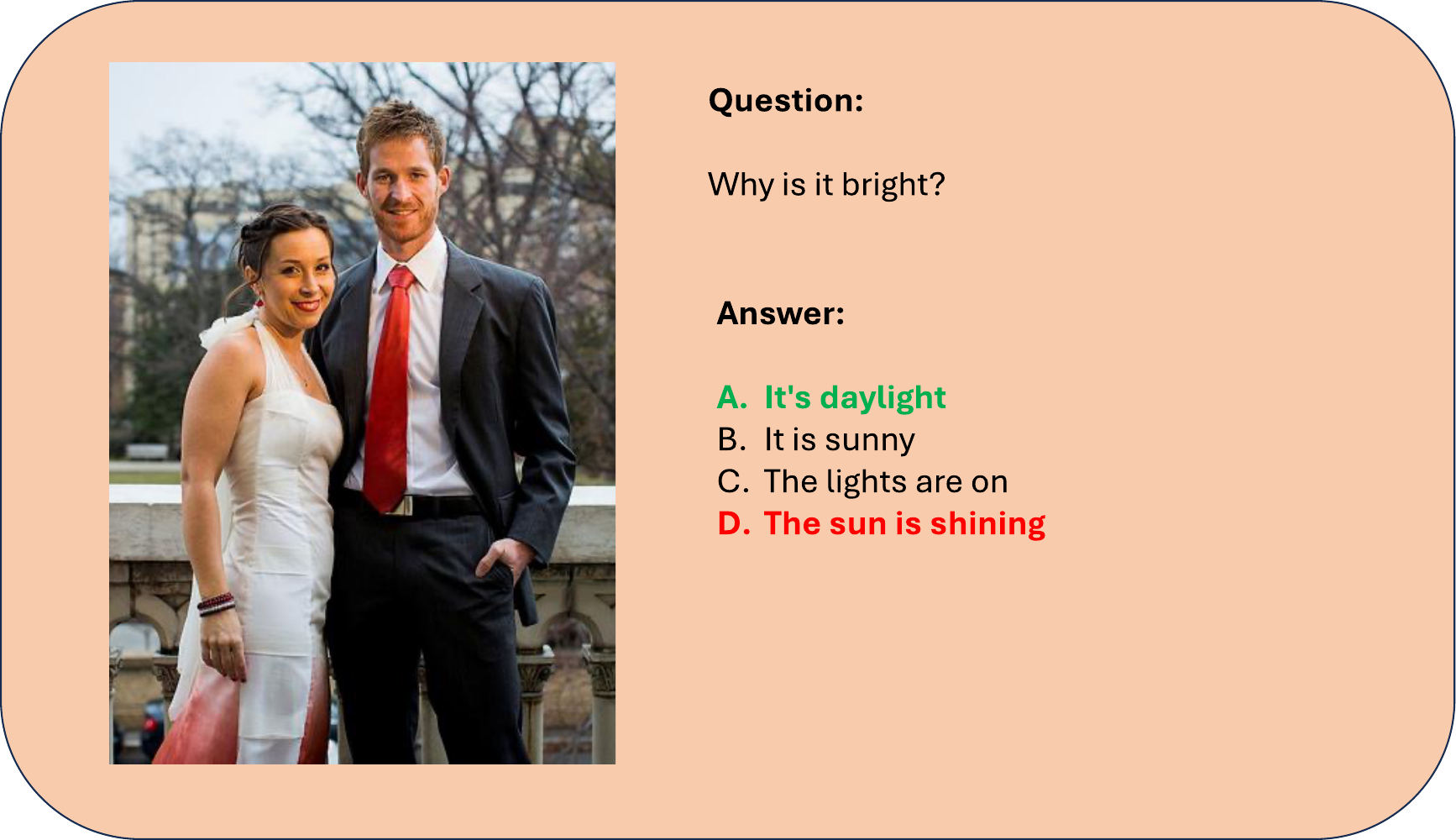}
  \end{subfigure}
  \caption{\textbf{Qualitative Examples 5--6.} After adapting to video tasks, \model~ still predicts the correct answer (\textcolor{green!55!black}{green}), whereas Bisecle fails on these Visual7W examples (\textcolor{red!70!black}{red}).}
  \label{fig:qual_vqa_56}
\end{figure*}

% \begin{table}[!t]
% \centering
% \caption{\emph{Continual VideoQA} results on \STARcap~with accuracy and forgetting.
% \bestcap{Bold} and \runnercap{underline} denote the best and second-best results.}
% \label{tab:videoqa_results_star}

% \begingroup
% % Make \venue smaller ONLY inside this table
% \renewcommand{\venue}[1]{\textcolor{black!60}{\textsf{(\fontsize{6}{6.8}\selectfont #1)}}}

% \fontsize{7}{8}\selectfont
% \setlength{\tabcolsep}{2.5pt}
% \renewcommand{\arraystretch}{0.9}

% \resizebox{0.5\textwidth}{!}{%
% \begin{tabular}{L{3.0cm}*{2}{c}}
% \toprule
% \textbf{Method ~\venue{Venue}} &
% \hdr{STARc}{STAR} \\
% & \HACC & \HFOG \\
% \midrule
% \rowcolor{black!3}{LLaMA-Adapter ~\venue{ICLR'24}} 
% & 46.89 & 11.54 \\
% \midrule
% L2P ~\venue{CVPR'22}        
% & 48.25 & 10.82 \\
% \rowcolor{black!3}DualPrompt ~\venue{ECCV'22} 
% & 49.73 & 10.11 \\
% LAE ~\venue{ICCV'23}         
% & 49.15 &  9.87 \\
% \rowcolor{black!3}ProgPrompt ~\venue{ICLR'23}  
% & 51.05 &  8.75 \\
% ColPro ~\venue{EMNLP'24}     
% & 48.67 &  8.13 \\
% \rowcolor{black!3}DAM ~\venue{WACV'25}        
% & 50.64 &  8.92 \\
% Bisecle ~\venue{NeurIPS'25}     
% & \best{52.16} & \runner{7.60} \\
% \midrule
% \rowcolor{black!3}\textbf{\model~ ~\venue{ours}} 
% & 48.31 & \best{5.92} \\
% \bottomrule
% \end{tabular}
% }% end resizebox

% \endgroup
% \end{table}

% Plan to improve \model~

% 1. Fix Routing Error
% 2. Rerun with correct version
% 3. Update the paper with feedback from reviewing.

\end{document}